\documentclass{article}

\usepackage{microtype}
\usepackage{graphicx}
\usepackage{subcaption}
\usepackage{booktabs} 
\usepackage{listings}
\usepackage{wrapfig}
\usepackage{svg}
\usepackage{tabularx}

\usepackage{hyperref}

\usepackage[preprint]{icml2026}

\usepackage{amsmath}
\usepackage{amssymb}
\usepackage{mathtools}
\usepackage{amsthm}
\usepackage{titlesec}
\titlespacing*{\paragraph}
{0pt}{0pt}{0.5em}

\usepackage[capitalize,noabbrev]{cleveref}

\newcommand{\appref}[1]{Appendix~\ref{#1}}

\usepackage{verbatim}
\usepackage[table]{xcolor}
\definecolor{lightgray}{gray}{0.9}
\usepackage[most]{tcolorbox}

\newcommand{\cut}[1]{}
\newcommand{\eg}{\emph{e.g.},~}
\newcommand{\ie}{\emph{i.e.},~}

\newcommand{\R}{\mathbb{R}}

\definecolor{lightblue}{RGB}{218, 242, 250}

\theoremstyle{plain}
\newtheorem{theorem}{Theorem}[section]
\newtheorem{proposition}[theorem]{Proposition}

\newtheorem{corollary}[theorem]{Corollary}
\theoremstyle{definition}

\theoremstyle{remark}
\newtheorem{remark}[theorem]{Remark}

\tcbset{
  subtlebox/.style={
    enhanced,
    breakable,                      
    colback=blue!5,                  
    boxrule=0.4pt,                  
    arc=2pt,                        
    left=4pt,right=4pt,top=4pt,bottom=4pt, 
    boxsep=1pt,                     
  }
}

\tcolorboxenvironment{theorem}{subtlebox}
\tcolorboxenvironment{proposition}{subtlebox}
\tcolorboxenvironment{lemma}{subtlebox}
\tcolorboxenvironment{corollary}{subtlebox}
\tcolorboxenvironment{definition}{subtlebox}
\tcolorboxenvironment{assumption}{subtlebox}

\icmltitlerunning{Categorical Flow Maps}

\begin{document}

\twocolumn[
\icmltitle{Categorical Flow Maps}



\icmlsetsymbol{equal}{*}

\begin{icmlauthorlist}
\icmlauthor{Daan Roos}{equal,uva}
\icmlauthor{Oscar Davis}{equal,oxford}
\icmlauthor{Floor Eijkelboom}{equal,uva} \\
\icmlauthor{Michael Bronstein}{oxford,aithyra}
\icmlauthor{Max Welling}{uva}
\icmlauthor{\.Ismail \.Ilkan Ceylan}{tuwien,aithyra}
\icmlauthor{Luca Ambrogioni}{radboud}
\icmlauthor{Jan-Willem van de Meent}{uva}
\end{icmlauthorlist}

\icmlaffiliation{uva}{UvA-Bosch Delta Lab, University of Amsterdam, Amsterdam, Netherlands}
\icmlaffiliation{oxford}{Department of Computer Science, University of Oxford, Oxford, UK}
\icmlaffiliation{tuwien}{TU Wien, Vienna, Austria}
\icmlaffiliation{aithyra}{AITHYRA, Vienna, Austria}
\icmlaffiliation{radboud}{Donders Institute for Brain, Cognition and Behaviour, Radboud University}

\icmlcorrespondingauthor{Oscar Davis}{oscar.davis@cs.ox.ac.uk}
\icmlcorrespondingauthor{Floor Eijkelboom}{f.eijkelboom@uva.nl}
\icmlcorrespondingauthor{Daan Roos}{d.f.a.roos@uva.nl}

\icmlkeywords{Machine Learning, ICML}

\vskip 0.3in
]



\newcommand{\icmlEqualContributiondraw}{\textsuperscript{*}Equal contribution; ordering determined by a random draw}
\printAffiliationsAndNotice{\icmlEqualContributiondraw}

\begin{abstract}
We introduce Categorical Flow Maps, a flow-matching method for accelerated few-step generation of categorical data via self-distillation. Building on recent variational formulations of flow matching and the broader trend towards accelerated inference in diffusion and flow-based models, we define a flow map towards the simplex that transports probability mass toward a predicted endpoint, yielding a parametrisation that naturally constrains model predictions. Since our trajectories are continuous rather than discrete, Categorical Flow Maps can be trained with existing distillation techniques, as well as a new objective based on endpoint consistency. This continuous formulation also automatically unlocks test-time inference: we can directly reuse existing guidance and reweighting techniques in the categorical setting to steer sampling toward downstream objectives. Empirically, we achieve state-of-the-art few-step results on images, molecular graphs, and text, with strong performance even in single-step generation.
\end{abstract}

\section{Introduction}

Consistency models~\citep{song2023consistencymodels, song2023improvedtechniquestrainingconsistency} have emerged as a powerful approach for accelerating inference in diffusion and flow-based generative models, enabling high-quality generation in one or a few steps. In the context of flow matching, \citet{boffi2025flowmapmatchingstochastic,boffi2025buildconsistencymodel} introduced a general framework for self-distillation through flow maps, which not only learn the instantaneous velocity of the flow, 
but also the displacement along the flow's trajectories over any time interval.

Strong empirical results for few-step generation on images have been demonstrated using Mean Flows~\citep{geng2025meanflows,geng2025improvedmeanflowschallenges} and Terminal Velocity Matching~\citep{zhou2025terminalvelocitymatching}, which can be understood as Eulerian and Lagrangian formulations, respectively. While these efforts have focused on continuous domains, many practical applications involve inherently discrete structures, including text, sequences, and graphs. This naturally raises the question of whether such accelerated inference is also possible for discrete data.

\begin{figure*}[t]
    \centering
    \includegraphics[width=1.0\linewidth]{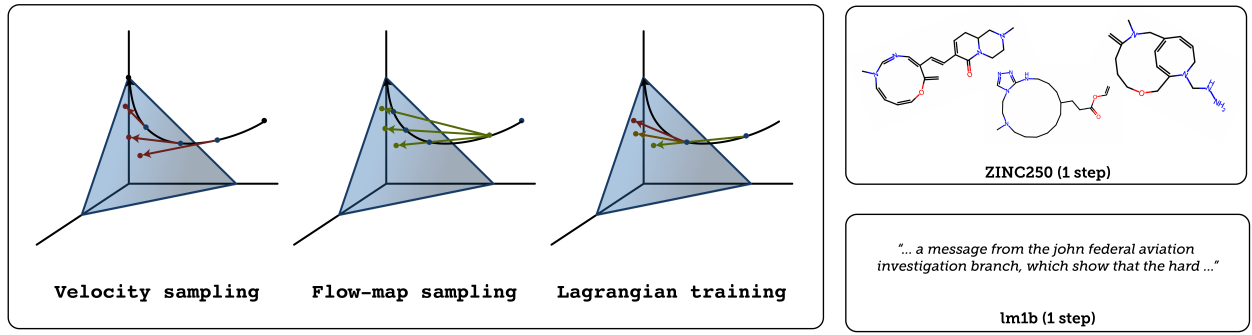}
    \caption{Overview of our method: CFM instantaneous velocity sampling (left), CFM flow-map sampling (middle), and Lagrangian training (right), plus their corresponding paths on a probability simplex. The red arrows denote the instantaneous velocity induced by $\pi_{t, t}$, the green arrows denote the flow map induced by $\pi_{s, t}$, and the yellow arrow denotes the time derivative of the flow map.}
    \label{fig:flowmap}
\end{figure*}

To reason about self-distillation for discrete diffusion, it is helpful to distinguish two classes of underlying generative processes: those based on discrete interpolations and those based on continuous interpolations. 
Many recent advances have been based on discrete diffusion in which the forward and reverse processes are modelled as Markov chains, in either discrete~\citep{d3pm,mdlm,sedd,shi2025md4} or continuous time~\citep{campbell2022continuous, sun2023scorebased, campbell2024dfm}. This line of work has led to results on scaling~\citep{vonrutte2025scalingbehaviordiscretediffusion}, competitive performance relative to strong autoregressive baselines~\citep{llada}, and principled methods for guidance~\citep{svdd,fksteering,schiff2025simpleguidanceudlm}. 

Yet, porting accelerated inference for methods based on discrete interpolations poses several challenges. One difficulty is that generation requires a large number of steps, with lower bounds quantifying this cost~\citep{hayakawa2025distillationdiscretediffusiondimensional,parallelbench}, making these models resemble permuted autoregressive generators at test time. 
Moreover, flow map objectives cannot be applied directly, because of the discreteness and stochasticity of the trajectories, even when conditioned on both ends.
Duo~\citep{duo} partially side-steps this issue by using a continuous latent variable with $\arg\max$ decoding to induce deterministic paths, but it still operates on fundamentally discrete trajectories and employs a more restrictive distillation scheme than continuous-domain consistency methods.

In this work, we instead focus on continuous interpolations of discrete data~\citep{dieleman2022continuous,fisherflow,vfm,categoricalflowmatching}, where the continuous state space for trajectories makes consistency-style self-distillation techniques conceptually straightforward to apply.
We introduce \emph{Categorical Flow Maps} (CFM), a simple flow matching construction for discrete data that learns a flow map transporting a \emph{continuous} prior to a \emph{discrete} target distribution. Inspired by variational flow matching~\citep{vfm}, CFMs use a simplex-constrained endpoint parametrisation that not only enables compatibility with existing continuous-domain self-distillation methods, but also motivates a novel cross-entropy-based loss tailored to categorical data that provably bounds the Lagrangian residual. Empirically, we demonstrate high-quality few-step generation on discrete modalities. Specifically, we achieve with a single step over $95.8$\% valid molecular graphs on QM9 and $93.5\%$ on ZINC, $10.1$ FID on Binary MNIST, Text8 samples with $5.33$ NLL, and LM1B with $274.87$ Gen-PPL.

\cut{Many of the objects we want to generate in  machine learning -- text, graphs, molecules, and other symbol sequences -- are fundamentally discrete. A common representation of a categorical state is a one-hot vector, or more generally a probability vector that lies on the simplex $\Delta^{K-1}$. This geometry makes discrete generation both practically important and structurally constrained: valid states must remain non-negative and sum to one, so even small modeling errors can push intermediate computations off the domain.

In continuous domains, \emph{distillation} has emerged as a powerful approach for accelerating transport-based generative models. Viewing generation as moving a simple base distribution $p_0$ to a data distribution $p_1$ along a time-dependent probability path, one can learn flow maps $X_{s,t}$ that transport states from time $s$ to time $t$. Once such maps are learned, a student model can be trained to ``jump'' along the trajectory in a small number of evaluations, rather than solving an ODE with many expensive numerical steps. This can drastically reduce sampling cost while preserving sample quality.

However, this paradigm does not transfer cleanly to discrete generative modeling. Autoregressive and discrete diffusion approaches evolve in fully discrete space, ``jumping'' discontinuously between tokens and thus providing no smooth trajectory to distill. At the other extreme, naively applying continuous flow matching to discrete data typically learns unconstrained Euclidean vector fields.  On constrained domains such as the simplex, these dynamics can drift off-manifold, producing invalid intermediate states and brittle trajectories -- precisely the failure mode that undermines stable distillation, especially under aggressive compression.

Variational Flow Matching (VFM) offers a natural bridge: it remains a continuous-time method, yet its endpoints correspond to categorical outcomes on the simplex. Crucially, VFM defines a valid probability path for all $t \in [0,1]$, so intermediate states can be interpreted as probability vectors rather than unconstrained Euclidean points. By parametrising the dynamics through a categorical variational posterior, the model predicts $\pi_t(x_t) \in \Delta^{K-1}$ at every time, making the transport geometry-compatible by construction.

We leverage this observation to introduce \textbf{Categorical Flow Maps}, a framework for distillable discrete generation built on VFM. Our parametrisation enforces simplex constraints throughout sampling, yielding stable trajectories that remain valid across time. This geometric inductive bias stabilises flow-map learning and enables robust self-distillation even in regimes where unconstrained methods tend to collapse or produce invalid low-quality samples under compression.
}

\section{Background}
In this work, we consider the typical generative modelling setup: given samples $x_1, \ldots, x_n \sim p_\mathrm{data}$, we aim to learn a distribution, $\hat{p}_\mathrm{data}$, such that $\hat{p}_\mathrm{data}\approx p_\mathrm{data}$, and which enables us to draw new samples.



\subsection{Stochastic Interpolants}
A broad class of state-of-the-art approaches~\citep{flowmatching,rectifiedflow,stochasticinterpolants} casts generation as the \emph{dynamical transport of measures}: starting from a prior distribution $p_0$ that is easy to sample from, one constructs a dynamical system that pushes $p_0$ to the target distribution $p_1 \equiv p_{\mathrm{data}}$. A stochastic interpolant is a stochastic process, $I:[0,1]\times\R^d\times\R^d\to\R^d$, that continuously interpolates between $x_0 \sim p_0$ and $x_1 \sim p_1$, defined by
\begin{equation}
    \forall t \in [0, 1],\qquad I_t(x_0, x_1) \coloneqq \alpha_t x_0 + \beta_t x_1,
\end{equation}
where the joint distribution $(x_0, x_1)$ has marginals $p_0$ and $p_1$, $\alpha$ and $\beta$ are differentiable, and $I_0(x_0, x_1) = x_0$ and $I_1(x_0, x_1) = x_1$. In this work, we consider the straight-line interpolation  $\alpha_t \equiv 1-t$ and $\beta_t \equiv t$ (see~\appref{app:general_case} for the general case). Thus, $I_t$ induces a probability path, $(p_t)_{t\in[0,1]}$, that continuously moves from $p_0$ to $p_1$. Moreover, by its differentiability, the path also defines a probability flow, given by
\begin{equation}
    \label{eq:probflow}
    \dot x_t = b_t(x_t),\qquad\text{with } x_0\sim p_0,
\end{equation}
where the drift, $b$, is given by the conditional expectation
\begin{equation}
    b_t(x_t) = \mathbb{E}[\dot I_t\mid I_t = x_t].
\end{equation}
The velocity field is typically learned by minimising the following mean-squared error objective:
\begin{equation}
    \mathcal{L}_{\mathrm{fm}}(\theta) \coloneqq \mathbb{E}_{t, x_0, x_1} \left[ \left| \left| v^{\theta}_t(x_t) - \dot{I}_t(x_t) \right| \right|^2 \right].
\end{equation}

\subsection{Flow Maps}
\label{sec:background_flow_maps}
Instead of learning the instantaneous vector field of the probability flow, one can learn integrals thereof to avoid, at inference time, the costly numerical integration (typically requiring tens to hundreds of network evaluations) required for high quality samples. This flow map framework was first introduced in~\citet{boffi2025flowmapmatchingstochastic,boffi2025buildconsistencymodel}.

We define the flow map, $X:[0,1]^2\times\R^d\to\R^d$, for any two times $(s,t)\in[0,1]^2$, as the map which brings $x_s$ to $x_t$, for any solution $(x_t)_{t\in[0,1]}$ of~\eqref{eq:probflow}; that is to say, $X_{s,t}(x_s) = x_t$. Knowing the flow map thus enables the generation of samples from $p_1$ by simply drawing $x_0\sim p_0$ and applying thereon $X_{0,1}$, yielding $X_{0,1}(x_0) = x_1\sim p_1$. We parametrise these as 
\begin{equation}
    X_{s,t}(x_s) = x_s + (t-s)v_{s,t}(x_s),
\end{equation}
so that the boundary condition $X_{t,t} = \mathrm{Id}$ for any $t$ is automatically satisfied. Here, $v_{s,t}(x_s)$ represents the average velocity transporting $x_s$ from time $s$ to $t$; in particular, $v_{t,t}$ recovers the instantaneous velocity field $b_t$ from \cref{eq:probflow}.

\citet{boffi2025flowmapmatchingstochastic,boffi2025buildconsistencymodel} offer three different characterisations of flow maps, each of which gives rise to a training objective---we review these in more detail in~\appref{app:flow_map_background}. In the context of our work, we focus on the Lagrangian self-distillation (LSD) objective, which offers the best empirical performance:
\begin{gather*}
    \mathcal{L}_{\mathrm{LSD}}(\theta) \coloneqq \mathbb{E}\left\lVert \partial_t X_{s,t}^\theta(x_s) - v_{t,t}^\theta(X_{s,t}(x_s))\right\rVert^2,
\end{gather*}
where $t\sim\mathcal U(0,1)$, $s \mid t\sim\mathcal U(0,t)$, and $x_s = I_s(x_0, x_1)$. We use this convention for all expectations throughout. These losses are jointly optimised with a standard flow matching loss on $v_{t,t}^\theta$. A stop-gradient operator is applied to the $v_{t,t}^\theta$ term 
to stabilise training.

\subsection{Variational Flow Matching}
\label{sec:back_vfm}
For $t \neq 1$, we can rewrite the drift as
\begin{equation}
\label{eq:vfm_velocity}
    b_t(x_t) = \frac{\mu_{t}(x_t) - x_t}{1-t},~
    \mu_{t}(x_t) \coloneqq \mathbb{E}[x_1 \mid I_t = x_t].
\end{equation}
Since $b_t$ and $\mu_{t}$ determine each other, learning the velocity field is equivalent to estimating the conditional mean.

Variational Flow Matching (VFM)~\citep{vfm} formulates the learning problem in terms of an estimate for $\mu_t$ rather than $b_t$. Concretely, this estimate is defined as the mean of a variational family $q_t^\theta(x_1 \mid x_t)$. VFM then minimizes a time-averaged KL divergence to approximate the posterior $p_t(x_1 \mid x_t)$ of the probability path,
\begin{equation}
    \mathbb{E} \left[ \mathrm{KL}(p_t(x_1 \mid x_t) \| q_t^\theta(x_1 \mid x_t)) \right].
\end{equation}
Since the entropy of $p_t(x_1|x)$ does not depend on $\theta$, this is equivalent to minimising the negative log-likelihood
\begin{equation}
\label{eq:vfm_nll}
    \mathcal{L}(\theta)
    =
    -
    \mathbb{E}
    \left[
        \log q_t^\theta(x_1 \mid x_t)
    \right].
\end{equation}
Any variational family, the mean of which matches the true conditional mean $\mu_{t}$, recovers the correct velocity field via~\cref{eq:vfm_velocity}. A simple and common choice is a mean-field factorisation across dimensions,
\begin{equation}
\label{eq:mean_field}
    \log q_t^\theta(x_1 \mid x)
    \coloneqq
    \sum_{d=1}^{D} \log q_t^\theta(x_1^{(d)} \mid x),
\end{equation}
For continuous data, a Gaussian 
$q_t^\theta(x_1 \mid x) = \mathcal{N}(x_1 \mid \mu_{t}^\theta(x), \sigma^2 I)$ reduces the objective to a squared error on $\mu_{t}^\theta$, recovering the stochastic-interpolant  regression loss. For discrete data, however, the variational perspective becomes essential: a categorical distribution $\mathrm{Cat}(x_1\mid \pi_{t}^\theta(x))$ over $K$ classes yields a cross-entropy objective, which keeps endpoint predictions on the simplex $\Delta^K := \{p\in\mathbb{R}^{K}:\ p_k\ge 0,\ \sum_{k=1}^{K} p_k=1\}$ by construction.  As we show next, this endpoint parametrisation also enables the use of flow-map self-distillation losses.


\section{Categorical Self-Distillation}
\label{sec:method}

\subsection{Constructing Categorical Flow Maps}

As argued for previously, defining a generative process based on a continuous interpolation is crucial to leverage existing self-distillation losses. For this, we employ the VFM framework introduced in~\Cref{sec:back_vfm}. However, VFM relies on endpoint predictions; that is, instead of learning the true vector field, $b_t$, we aim to learn $\mu_{1\mid t}$, the conditional expectation. It is thus necessary to port an endpoint prediction-based parametrisation for flow maps.

\paragraph{Endpoint parametrisation.} Instead of the vector-based $v_{s,t}$ parametrisation, we opt for an endpoint one:
\begin{tcolorbox}[subtlebox]
\begin{equation}
\label{eq:flowmap_endpoint_param}
X_{s,t}(x_s) = x_s + (t-s) \frac{\pi_{s,t}(x_s) - x_s}{1-s},
\end{equation}
\end{tcolorbox}
where $\pi:[0,1]^2 \times \R^d\to \Omega$, and $\Omega$ is the support of the data. We refer to $\pi_{s,t}$ as a \emph{partial denoiser}. This choice maintains two crucial properties of flow maps:
\begin{align}
    \lim_{s\to t}X_{s,t}(x_s) &= \frac{\pi_{t,t}(x_t) - x_t}{1-t} = b_t(x_t)\\\
    (1-t)\partial_t X_{s,t}(x_s) &= \pi_{t,t}(X_{s,t}(x_s)) - X_{s,t}(x_s).\label{eq:lsd_relation}
\end{align}
The first one is the ``tangent condition'': it is crucial as it shows that, at times $s = t$, our partial denoiser still retrieves the instantaneous vector field of the probability flow. The second one is the Lagrangian condition, which enables self-distillation training. The proofs are in~\appref{app:proof_tangent_lagrangian}.

From this discussion, it is easy to see that we can apply flow map objectives on $s < t$ times, alongside the VFM endpoint loss for $s = t$, solely relying on a partial denoiser supported on the simplex, $\Omega=\Delta^K$. This shows that it is possible to train a flow map using an endpoint parametrisation, which allows us to leverage the inductive bias given by the knowledge of $\Omega$.

\paragraph{Training objective.} Parametrising our learnt flow map, $X^\theta$, as in~\eqref{eq:flowmap_endpoint_param} (resp., with $\pi^\theta$), 
we define the VFM part of our loss as in~\cref{eq:vfm_nll}:
\begin{equation}
\mathcal{L}_{\mathrm{inf}}(\theta)
\coloneqq
-
\mathbb{E}\left[
\log q^{\theta}_{t,t}(x_1 \mid x_t)
\right],
\end{equation}
where $q^{\theta}_{t,t}(x_1 \mid x_t) := \prod_{d=1}^{D}\mathrm{Cat}(x_1^{d}\mid \pi^{\theta,d}_{t,t}(x))$, as before. From~\cref{eq:lsd_relation}, we can also define our Lagrangian self-distillation as:
\begin{equation}
\label{eq:csd_loss}
\mathcal{L}_{\mathrm{CSD}}(\theta)
\coloneqq
\mathbb{E}\left[w_t
\,\big\lVert r_{s,t}^{\theta}(X_s)\big\rVert^{2}
\right]
\end{equation}
where $w_t$ is a time-dependent weight, and
\begin{equation*}
r_{s,t}^{\theta}(x) := (1-t)\partial_t X^{\theta}_{s,t}(x) - \pi^{\theta}_{t,t}(X^{\theta}_{s,t}(x)) + X^{\theta}_{s,t}(x).
\end{equation*}
We found that the typical choice of $w_t \equiv (1-t)^{-2}$ was too unstable near time one (despite clamping), while $w_t \equiv (1-t)^{-1}$ performed better, but simply $w_t\equiv 1$ had the best performance and the most stable training. 

Finally, as a direct consequence of~\citet{boffi2025buildconsistencymodel}, the true flow map is a minimiser of~\eqref{eq:csd_loss}, and uniqueness is given by some mild regularity assumptions on the true flow.

\subsection{Distillation through Endpoint Consistency}

While the usual Lagrangian self-distillation loss is satisfactory in theory, in practice, there may be a mismatch between its magnitude and that of the VFM loss. Indeed, the former is a mean-squared error loss, and the latter a cross-entropy one, sometimes leading to differences of up to two orders of magnitude, in practice. Having two cross-entropy terms may lead to more ``natural'' training dynamics because of closer magnitudes, and conceptually restricts our method to the probability simplex instead of dealing with unconstrained vector fields. We show that such an objective is achievable through the following \textit{Endpoint-Consistent Lagrangian Distillation} (ECLD) loss bound, proven in~\appref{app:loss_bounds_proofs}.
\begin{proposition}
    [ECLD controls the Lagrangian residual] \label{prop:ecld_bound}
    Under the linear VFM decoder and the induced flow map from~\eqref{eq:flowmap_endpoint_param}, define the endpoint consistency loss  
    \begin{equation}
        \mathcal{L}_{\mathrm{EC}}(\theta) \coloneqq \mathbb{E}\left[w_t\, \mathrm{KL}(\pi_{t,t}^\theta(X_{s,t}(x_s) \middle\| \pi^\theta_{s,t}(x_s))\right] ,
    \end{equation}
    with $w_t = (1-t)^{-2}$, and the temporal drift term
    \begin{equation}
        \mathcal{L}_{\mathrm{TD}}(\theta) \coloneqq \mathbb{E}\left[\gamma_{s,t}^2\,\left\|\partial_t \pi_{s,t}^\theta(x_s)\right\|_2^2 \right],
    \end{equation}
    where $\gamma_{s,t} = (t-s)/(1-s)$. Then 
    \begin{equation}
        \mathcal{L}_{\mathrm{CSD}} \;\leq\; 4 \,\mathcal{L}_{\mathrm{EC}} \;+\; 2\,\mathcal{L}_{\mathrm{TD}}.
    \end{equation}
\end{proposition}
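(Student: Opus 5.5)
The plan is to compute the residual $r^\theta_{s,t}$ in closed form using the endpoint parametrisation~\eqref{eq:flowmap_endpoint_param}. It should collapse to a difference of two simplex-valued predictions plus a term involving $\partial_t\pi^\theta_{s,t}$. Then I would split with $\|a+b\|^2\le 2\|a\|^2+2\|b\|^2$ and control the first piece by a KL via Pinsker. I take $\mathcal{L}_{\mathrm{CSD}}$ with the weight $w_t\equiv 1$ used in practice; the argument for other bounded weights is identical up to constants.

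\emph{Step 1 (algebra of the residual).} Write $\gamma=\gamma_{s,t}=(t-s)/(1-s)$ and note that $1-\gamma=(1-t)/(1-s)$. Then $X^\theta_{s,t}(x)=x+\gamma(\pi^\theta_{s,t}(x)-x)$. Differentiating in $t$ gives
\[
\partial_t X^\theta_{s,t}(x)=\frac{\pi^\theta_{s,t}(x)-x}{1-s}+\gamma\,\partial_t\pi^\theta_{s,t}(x).
\]
Multiplying by $(1-t)$ yields $(1-\gamma)(\pi^\theta_{s,t}-x)+(1-t)\gamma\,\partial_t\pi^\theta_{s,t}$. Adding $X^\theta_{s,t}(x)=x+\gamma(\pi^\theta_{s,t}-x)$, the $x$-terms cancel and we obtain
\[
r^\theta_{s,t}(x)=\bigl(\pi^\theta_{s,t}(x)-\pi^\theta_{t,t}(X^\theta_{s,t}(x))\bigr)+(1-t)\gamma_{s,t}\,\partial_t\pi^\theta_{s,t}(x).
\]
This identity is the heart of the proof. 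It is where the ``linear VFM decoder'' assumption enters: the map from endpoint to state is affine, so the Lagrangian residual is exactly an endpoint mismatch plus a drift of the partial denoiser.

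\emph{Step 2 (split and bound).} Applying $\|a+b\|^2\le 2\|a\|^2+2\|b\|^2$ to this identity gives
\[
\|r^\theta_{s,t}\|^2\le 2\|\pi^\theta_{s,t}-\pi^\theta_{t,t}(X^\theta_{s,t})\|_2^2+2(1-t)^2\gamma_{s,t}^2\|\partial_t\pi^\theta_{s,t}\|_2^2 .
\]
Since $(1-t)^2\le 1$, the second term integrates to at most $2\mathcal{L}_{\mathrm{TD}}$. For the first term, both vectors lie in the simplex, blockwise per dimension $d$ under the mean-field factorisation. There I use $\|p-q\|_2^2\le\|p-q\|_1^2\le 2\,\mathrm{KL}(p\|q)$, which is Pinsker's inequality in the form $\mathrm{TV}^2\le\tfrac12\mathrm{KL}$ with $\mathrm{TV}=\tfrac12\|\cdot\|_1$. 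I take $p=\pi^\theta_{t,t}(X_{s,t})$ and $q=\pi^\theta_{s,t}(x_s)$, matching the KL orientation in $\mathcal{L}_{\mathrm{EC}}$. Summing over dimensions is harmless because the KL of product categoricals is the sum of per-dimension KLs, and $\sum_d\|\cdot\|_1^2\le 2\sum_d\mathrm{KL}_d$. This gives at most $4\,\mathrm{KL}$, and since $w_t=(1-t)^{-2}\ge 1$ the first term is at most $4\mathcal{L}_{\mathrm{EC}}$ after taking expectations.

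\emph{Main obstacle.} I expect Step 1 to be the main obstacle: the constants hinge on the exact cancellation $(1-\gamma)+\gamma=1$ of the $x$-terms. A second delicate point is matching the weights. With $w_t\equiv1$ in $\mathcal{L}_{\mathrm{CSD}}$, the factor $(1-t)^2$ on the drift term and the factor $(1-t)^{-2}$ in $\mathcal{L}_{\mathrm{EC}}$ are simply discarded as $\le 1$ and $\ge 1$ respectively. If instead one wanted $\mathcal{L}_{\mathrm{CSD}}$ with weight $(1-t)^{-2}$, the same computation shows precisely why $\mathcal{L}_{\mathrm{EC}}$ carries the $(1-t)^{-2}$ weight, while $\mathcal{L}_{\mathrm{TD}}$ needs none since the $(1-t)^2$ cancels. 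I would check which convention makes the stated constants $4$ and $2$ come out, and state that convention explicitly.
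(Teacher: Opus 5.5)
Your proof is correct and follows the paper's argument essentially verbatim: the same identity $r^\theta_{s,t}=\bigl(\pi^\theta_{s,t}-\pi^\theta_{t,t}(X^\theta_{s,t})\bigr)+(1-t)\gamma_{s,t}\,\partial_t\pi^\theta_{s,t}$, the same split $\|a+b\|^2\le 2\|a\|^2+2\|b\|^2$, and the same use of Pinsker (in the orientation of $\mathcal{L}_{\mathrm{EC}}$) on the endpoint term. On the convention you left open, the paper also takes $w_t=(1-t)^{-2}$ in $\mathcal{L}_{\mathrm{CSD}}$, which makes it the unscaled Lagrangian loss; the $(1-t)^2$ on the drift term then cancels exactly, as in your closing remark, so your $w_t\equiv 1$ version is a weaker corollary, and your blockwise Pinsker over the mean-field factors is a detail the paper leaves implicit.
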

The bound decomposes the Lagrangian residual into two terms. The first measures \emph{endpoint consistency}; i.e.~whether the ``teacher'' prediction     (the instantaneous vector field with a stop-gradient) at the transported state, $X_{s,t}(x_s)$, agrees with the student's prediction, $\pi_{s,t}(x_s)$, that generated the transport. The second is a drift term that arises because the student's endpoint predictor $\pi_{s,t}$ varies with $t$. When both terms vanish---making the model self-consistent and the endpoint predictor time-stable---the Lagrangian residual is zero and the learnt map is the unique flow map.

In practice, we minimise a cross-entropy variant rather than KL divergence. Indeed, since $\mathrm{CE}(p,q)= \mathrm{KL}(p\|q) + H(p)$ and entropy being non-negative,  the bound remains valid. Because of the stop-gradient operator a simple cross-entropy loss yields the same gradients. This yields a numerically stable self-distillation objective that inherits the same Lagrangian guarantees as well:

\begin{corollary}[Cross-entropy ECLD]
\label{cor:ce-ecld}
Let $\pi^{\mathrm{tgt}}_{s,t}(x_s) \coloneqq \mathrm{sg}\left(\pi^\theta_{t,t}\big(X_{s,t}(x_s)\big)\right)$, and define
\begin{equation}
\mathcal{L}_{\mathrm{CE\text{-}EC}}(\theta)
\coloneqq
\mathbb{E}\left[
w_t
\mathrm{CE} \left(
\pi^{\mathrm{tgt}}_{s,t}(x_s),
\pi_{s,t}^{\theta}(x_s)
\right)
\right].
\end{equation}
Then
\begin{equation}
\mathcal{L}_{\mathrm{CSD}}
~\le~
4\,\mathcal{L}_{\mathrm{CE\text{-}EC}}
+
2\,\mathcal{L}_{\mathrm{TD}}.
\end{equation}
\end{corollary}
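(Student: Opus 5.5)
The corollary follows from \Cref{prop:ecld_bound}, so the plan is to compare $\mathcal{L}_{\mathrm{EC}}$ and $\mathcal{L}_{\mathrm{CE\text{-}EC}}$ term by term, with no new analysis of the residual $r^\theta_{s,t}$.

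\textbf{Step 1: reduce to the proposition.} By \Cref{prop:ecld_bound} we already have $\mathcal{L}_{\mathrm{CSD}}\le 4\,\mathcal{L}_{\mathrm{EC}}+2\,\mathcal{L}_{\mathrm{TD}}$. The stop-gradient only affects how gradients flow, not the value of any quantity. Hence the target $\pi^{\mathrm{tgt}}_{s,t}(x_s)$ equals $\pi^\theta_{t,t}(X_{s,t}(x_s))$ pointwise in value. It therefore suffices to show $\mathcal{L}_{\mathrm{EC}}\le\mathcal{L}_{\mathrm{CE\text{-}EC}}$, using the same weight $w_t=(1-t)^{-2}$ in both, and then chain the inequalities.

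\textbf{Step 2: pointwise comparison.} Fix $(s,t,x_0,x_1)$ and write $p=\pi^{\mathrm{tgt}}_{s,t}(x_s)$ and $q=\pi^\theta_{s,t}(x_s)$, both in $\Delta^K$, applied per dimension under the mean-field factorisation, so both sides sum over $d$. Use the identity
\begin{equation*}
\mathrm{CE}(p,q)=-\sum_k p_k\log q_k=\mathrm{KL}(p\|q)+H(p).
\end{equation*}
Since $p$ is a probability vector, $H(p)=-\sum_k p_k\log p_k\ge 0$ (convention $0\log 0=0$), so $\mathrm{KL}(p\|q)\le\mathrm{CE}(p,q)$. Because $w_t\ge 0$, multiplying by $w_t$ and taking expectations preserves the inequality. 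This gives $\mathcal{L}_{\mathrm{EC}}\le\mathcal{L}_{\mathrm{CE\text{-}EC}}$, and the corollary follows.

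\textbf{Main obstacle.} The only step needing care is well-definedness. If some $q_k=0$ with $p_k>0$, both sides equal $+\infty$ and the inequality holds trivially. Otherwise everything is finite, and softmax outputs keep $q$ in the interior of the simplex anyway. I would also remark that the gradient-equivalence claim, $\nabla_\theta\mathrm{CE}=\nabla_\theta\mathrm{KL}$ under $\mathrm{sg}$ since $H(p)$ is constant in $\theta$, is a side observation and is not needed for the inequality.
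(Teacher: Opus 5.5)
Your proposal is correct and matches the paper's argument. The paper likewise chains \Cref{prop:ecld_bound} with the identity $\mathrm{CE}(p,q)=\mathrm{KL}(p\|q)+H(p)$ and $H(p)\ge 0$ to get $\mathcal{L}_{\mathrm{EC}}\le\mathcal{L}_{\mathrm{CE\text{-}EC}}$, and it treats the stop-gradient gradient-equivalence as a separate remark, just as you do.
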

Hereinafter, we refer to the combined objective $\mathcal{L}_{\mathrm{ECLD}} \coloneqq 4 \mathcal{L}_{\mathrm{CE-EC}} + 2\mathcal{L}_{\mathrm{TD}}$ as the ECLD loss. 
Notably, both terms operate on endpoint predictions rather than velocities. The consistency term uses cross-entropy too, which is directly interpretable on the simplex; the temporal drift term regularises how the endpoint predictor varies with target time.

\begin{remark}
While it is also possible in theory to learn the vector field directly instead of leveraging endpoint training, we find empirically that our endpoint parametrisation substantially outperforms standard flow map methods that use unconstrained velocity fields, especially in higher dimensions. The experiments in~\Cref{sec:graph_experiments} demonstrate this clearly (``Naive Flow Maps''): improvements are modest on QM9 but become substantial on the higher-dimensional ZINC dataset. We deepen the discussion in~\appref{app:endpoint_param}.
\end{remark}


\subsection{Test-time inference in Categorical Flow Maps}
\label{sec:test_time_inference}

A practical advantage of formulating categorical flow maps fully within the stochastic-interpolant (SI) framework is that
existing SI test-time inference techniques carry over directly. 
As described in~\citet{testimescaling}, given a current state $X_{t, 1}$, the learned flow map provides a cheap and accurate lookahead of the end of the
trajectory, which we can use to steer sampling towards a downstream objective \emph{at test time}.

Letting, for any $t \neq 1$, $v_{t,t}(x_t) = \frac{\pi_{t,t}(x_t) - x_t}{1-t}$, we can sample stochastically from the learned dynamics of our flow, by compensating for the added noise term in our score:
\begin{equation}
    \mathrm{d}x_t = \left[ v_{t, t}(x_t) + \frac{ \sigma_t^2}{2} s_t(x_t)\right]  \, \mathrm{d}t + \sigma_t \, \mathrm{d}W_t,
\end{equation}
where $s_t(x_t) = (t v_{t,t} (x_t) - x_t) (1-t)^{-1}$ denotes the score, $\sigma:\R\to\R$ is the diffusion coefficient, and $(W_t)_{t\in[0,1]}$ is a standard Wiener process. Using $X_{t, 1}$ to infer the endpoint of the current trajectory accurately, we can use any differentiable reward $r(x)$ to adjust the above score to sample from a reward-tilted distribution $\tilde{p}(x) := p(x) e^{r(x) + F},$ where $F$ is a normalization factor, as follows:  
\begin{equation}
    \tilde s_t(x_t) = s_t(x_t) + t \nabla_{x_t} r (X_{t, 1}(x_t)),
\end{equation}
Here $r(x) = \log p(y \mid x)$ can be a classifier trained on clean data. This \textit{lookahead} can also be done by using the 
$\mu_{t,t}$, or the reward can be applied on the noisy state directly, which we will compare in the results.

As outlined in \citet{testimescaling}, the above dynamics require a correction to sample from the tilted distribution in an unbiased way. To this end, we use sequential Monte-Carlo (SMC) sampling during generation, as the above formulation fails to match the desired target distribution exactly. It uses importance weights to perform resampling---deleting low-reward trajectories and duplicating high-reward ones---which stabilises the sampling process and focuses computational effort on regions that satisfy the problem constraints. This mechanism ensures the final output is theoretically grounded and accurately tracks the reward-tilted distribution without requiring expensive retraining. In our experiments, we reuse the SMC settings from~\citet{testimescaling}.

In our setting, the reward model is trained on discrete objects (\eg one-hot/categorical or binary variables),
while the Flow Map naturally evolves in a continuous relaxation, producing soft states in the probability simplex.
We can therefore evaluate $r$ and its gradients using one of two standard choices:
(i) apply a straight-through (STE) discretization to $X_{t,1}(x_t)$ so that the reward sees a hard discrete sample in the forward pass,
while gradients are propagated through the underlying soft state; or
(ii) feed the soft simplex-valued prediction directly into the reward model (or a reward model trained with such soft inputs).
Both are valid test-time inference strategies in our framework; in our experiments, STE consistently yielded stronger guidance.

\section{Experiments}
We evaluate Categorical Flow Maps on three discrete data modalities: molecular graphs (QM9, ZINC250k), binarised images (MNIST), and text (Text8, LM1B).

\begin{table*}[t]
\centering
\small
\setlength{\tabcolsep}{10pt}
\renewcommand{\arraystretch}{1.2}
\begin{tabular}{l c ccc ccc}
\toprule
& & \multicolumn{3}{c}{\textbf{QM9}} & \multicolumn{3}{c}{\textbf{ZINC}} \\
\cmidrule(lr){3-5} \cmidrule(lr){6-8}
\textbf{Method} & \textbf{NFE} & \textbf{Valid}~$\uparrow$ & \textbf{Unique}~$\uparrow$ & \textbf{FCD}~$\downarrow$ & \textbf{Valid}~$\uparrow$ & \textbf{Unique}~$\uparrow$ & \textbf{FCD}~$\downarrow$ \\
\midrule
\multicolumn{8}{l}{\textit{Multi-Step Diffusion / Flow (reference)}} \\
\rowcolor{lightblue} \quad GDSS~{\scriptsize\citep{gdss}} & $1{,}000$ & $95.7$ & $98.5$ & $2.90$ & $97.0$ & $99.6$ & $14.7$ \\
\rowcolor{lightblue} \quad GruM~{\scriptsize\citep{grum}} & $1{,}000$ & $99.7$ & -- & $\mathbf{0.11}$ & $98.7$ & -- & $2.26$ \\
\rowcolor{lightblue} \quad CatFlow~{\scriptsize\citep{vfm}} & $100$ & $\mathbf{99.8}$ & $\mathbf{99.9}$ & $0.44$ & $\mathbf{99.2}$ & $\mathbf{100.0}$ & $13.2$ \\
\rowcolor{lightblue} \quad DeFoG~{\scriptsize\citep{qinmadeira2024defog}} & $500$ & $99.4$ & $96.3$ & $0.12$ & $\mathbf{99.2}$ & $99.9$ & $\mathbf{1.45}$ \\
\rowcolor{lightblue} \quad DeFoG~{\scriptsize\citep{qinmadeira2024defog}} & $50$ & $99.2$ & $96.2$ & $0.26$ & $96.7$ & $99.9$ & $2.12$ \\
\midrule
\multicolumn{8}{l}{\textit{Few-Step}} \\
\quad Set2GraphVAE~{\scriptsize\citep{graphvae}} & $1$ & $59.9$ & $93.8$ & -- & -- & -- & -- \\
\quad MoFlow~{\scriptsize\citep{moflow}} & $1$ & $91.4$ & $98.7$ & $4.46$ & $63.1$ & $99.9$ & $20.9$ \\
\quad PairFlow (*)~{\scriptsize\citep{park2025pairflowclosedformsourcetargetcoupling}} & $1$ & $44.3$ & $\mathbf{99.5}$ & -- & $11.2$ & $\mathbf{100.0}$ & -- \\
\quad PairFlow (*)~{\scriptsize\citep{park2025pairflowclosedformsourcetargetcoupling}} & $2$ & $66.9$ & $\mathbf{99.4}$ & -- & $42.6$ & $\mathbf{100.0}$ & -- \\
\quad Naive Flow Map~{\scriptsize\citep{boffi2025buildconsistencymodel}} & $1$ & $84.6$ & $93.0$ & $2.31$ & $59.5$ & $99.9$ & $21.7$ \\
\quad Naive Flow Map~{\scriptsize\citep{boffi2025buildconsistencymodel}}& $2$ & $87.7$ & $97.7$ & $0.69$ & $51.6$ & $\mathbf{100.0}$ & $16.1$ \\
\rowcolor{lightgray} \quad CFM CSD (ours) & $1$ & $90.0$ & $91.8$ & $\mathbf{2.14}$ & $73.5$ & $99.9$ & $\mathbf{19.3}$ \\
\rowcolor{lightgray} \quad CFM CSD (ours) & $2$ & $91.1$ & $97.6$ & $\mathbf{0.49}$ & $67.0$ & $99.9$ & $12.1$ \\
\rowcolor{lightgray} \quad CFM ECLD (ours) & $1$ & $\mathbf{95.8}$ & $89.2$ & $2.16$ & $\mathbf{93.5}$ & $99.5$ & $20.0$ \\
\rowcolor{lightgray} \quad CFM ECLD (ours) & $2$ & $\mathbf{97.0}$ & $96.6$ & $0.52$ & $\mathbf{82.4}$ & $99.9$ & $\mathbf{12.2}$ \\
\bottomrule
\end{tabular}
\caption{Molecular generation results on QM9 and ZINC. \colorbox{lightblue}{Top shaded rows} indicate multi-step reference methods; our goal is to match their quality with far fewer sampling steps. A dash indicates that the metric was not reported by the
corresponding source. In each sub-category (and for each NFE for few-step models), we make bold the best result. (*) PairFlow percentages derived from $N=1{,}024$ samples. MoFlow results from \citet{grum}.}
\label{tab:molecules}
\end{table*}

\subsection{Graph generation}
\label{sec:graph_experiments}
We evaluate categorical flow maps on molecular graph generation using QM9~\citep{qm9} (up to 9 heavy atoms) and ZINC250k~\citep{zinc} (up to 38 heavy atoms), comparing against multi-step diffusion and flow baselines (GDSS~\citep{gdss}, GruM~\citep{grum}, CatFlow~\citep{vfm}, DeFoG~\citep{qinmadeira2024defog}) as well as one-shot and few-step methods (Set2GraphVAE~\citep{graphvae}, MoFlow \citep{moflow}, PairFlow~\citep{park2025pairflowclosedformsourcetargetcoupling}). Our goal is not to surpass multi-step diffusion and flow baselines, but to match their sample quality with far fewer function evaluations. To isolate the effect of the endpoint parametrisation, we train an otherwise identical model using the standard Lagrangian loss with unconstrained velocities (Naive Flow Map). Full experimental details are provided in~\appref{app:molecules_section}.

\begin{figure*}[t]
\centering
\includegraphics[width=\linewidth]{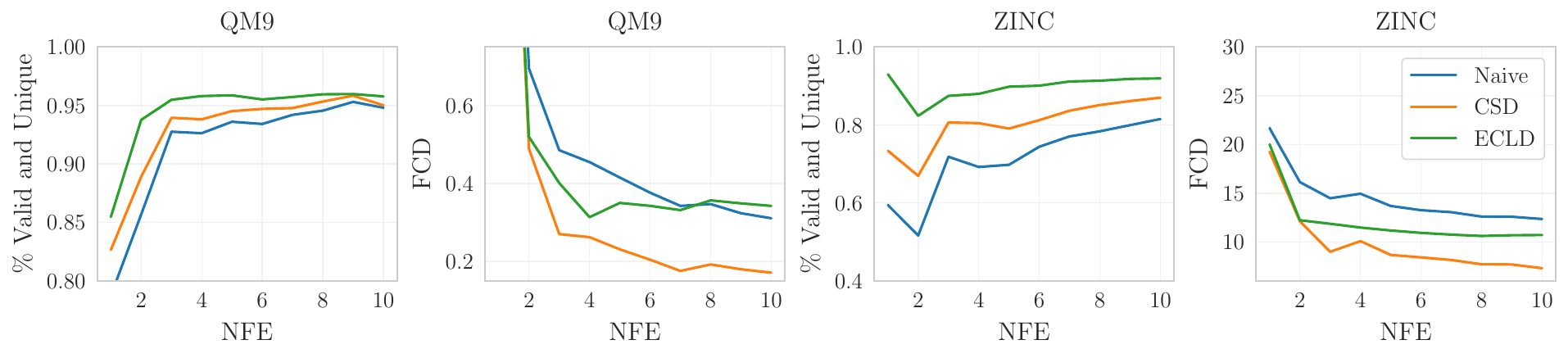}
\caption{Sample quality versus NFEs on QM9 (two leftmost panels) and ZINC (two rightmost panels), showing the fractions of valid and unique molecules ($\uparrow$), and FCD ($\downarrow$). We compare our methods (CSD, ECLD) against the unconstrained Flow Map baseline (Naive).}  
\label{fig:results_qm9_zinc}
\end{figure*}

\paragraph{Results.} Table~\ref{tab:molecules} compares sampling quality as a function of the number of function evaluations (NFE) on the QM9 dataset.
Among single-step generators, our method substantially improves validity over prior one-step baselines (Set2GraphVAE, MoFlow) and over PairFlow, while maintaining high uniqueness. At two steps, our sampler reaches near-saturated validity ($91$ -- $97\%$) and high uniqueness ($96$ -- $97\%$), while achieving an FCD competitive with strong multi-step flow-based baselines. Notably, while unconstrained Flow Maps achieve reasonably competitive performance on QM9, the benefits of our simplex-constrained endpoint parametrisation become substantially more apparent on the larger, more complex molecules of ZINC.

On ZINC, few-step generation is considerably more challenging due to the larger molecular structures. Nevertheless, our methods substantially outperform one-shot baselines: ECLD achieves $93.5$\% validity at $1$ NFE, far exceeding MoFlow ($63.1$\%) and PairFlow ($11.2$\%). At two steps, both CSD and ECLD achieve FCD scores ($12.1$ -- $12.2$) that match or improve upon multi-step baselines such as CatFlow ($13.2$ at 100 NFE) and GDSS ($14.7$ at $1{,}000$ NFE), representing a $50$ -- $500\times$ reduction in function evaluations.

\Cref{fig:results_qm9_zinc} visualises the quality–NFE trade-off across both datasets. Between our two losses, ECLD tends to favour validity while CSD achieves lower FCD at matched step counts. Notably, both CSD and ECLD outperform standard (unconstrained) Flow Maps in terms of validity and FCD, demonstrating the benefit of the simplex-constrained endpoint parametrisation. In \cref{fig:app_results_qm9_zinc_longer} we extend the results of \cref{fig:results_qm9_zinc} to NFEs up to 100. The CSD model continues to improve more steeply with additional NFE compared to Naive Flow Matching and ECLD, which level off earlier.

\Cref{fig:app_results_qm9_zinc_fm_euler_all} compares Flow Map sampling with Euler integration of the learned instantaneous velocity field. For the Naive and CSD losses, Flow Maps consistently outperform Euler sampling at matched step counts. For ECLD, however, Euler integration achieves comparable or better FCD. This indicates a gap between the learned flow map and the underlying velocity field for larger step sizes, suggesting that ECLD concentrates flow map accuracy on the low-NFE regime. In practice, this is not a limitation: one can use flow map sampling for 1--5 steps and switch to 
Euler integration when additional NFE is available.

\subsection{Image generation, test-time guidance}
\begin{figure}[t]
  \centering
  \begin{subfigure}[b]{0.48\linewidth}
    \centering
    \includegraphics[width=\linewidth]{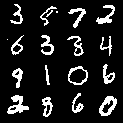}
  \end{subfigure}\hfill
  \begin{subfigure}[b]{0.48\linewidth}
    \centering
    \includegraphics[width=\linewidth]{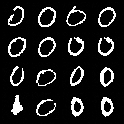}
  \end{subfigure}
   \caption{Samples from our categorical flow map on binary MNIST: (left) unconditional (right) and tilted by the classifier reward towards zeroes.}
  \label{fig:mnist}

\end{figure}

We study unconditional \emph{image} generation on binarised MNIST~\citep{deng2012mnist} digits ($28\times 28$), viewed as a grid of binary categorical variables (foreground/background). We use this simple visual domain as a controlled test-bed to analyse sample quality, especially for \emph{test-time guidance}, in our categorical flow-map framework. The goal is twofold: (i) achieve high-quality \emph{few-step} sampling (small NFE) for image synthesis, and (ii) study whether test-time guidance through reward tilting also applies to the discrete domain. We define a reward given by $R(x) \coloneqq \log p(y \!=\! 0 \mid x)$, aiming to tilt the samples towards zeroes, using a simple classifier. For steering, we compare 1) flow-map lookahead, 2) applying the reward to the `instantaneous' inferred state $\pi_{t,t}$ (denoiser lookahead), and 3) applying the reward on the noisy state directly (no lookahead). For the experimental setup, we directly follow~\citet{park2025pairflowclosedformsourcetargetcoupling}. All results are averaged over 10 runs.

\paragraph{Results.} Compared to~\citet{park2025pairflowclosedformsourcetargetcoupling}, we obtain improved FID in few-step sampling. We report $\mathbf{10.1}$ vs $\mathbf{12.9}$ for one step, $\mathbf{8.7}$ vs $\mathbf{9.3}$ for two steps, and $\mathbf{7.8}$ vs $\mathbf{8.5}$ for four steps. This shows indeed that categorical flow maps can obtain high-quality sampling with few NFEs. Moreover, as shown in~\cref{fig:mnist}, test-time reward steering successfully concentrates samples on the target digit class.~\Cref{tab:mnist_guidance} compares three guidance strategies for steering generation toward a target digit class. Using the flow map for lookahead achieves the best class-conditional FID, indicating higher sample quality, while also obtaining the highest target-class accuracy (\ie reward satisfaction). This demonstrates that categorical flow maps with straight-through estimation of gradients enabling effective test-time guidance for discrete data.

\begin{table}[tbp]
\centering
{\small
\begin{tabularx}{\linewidth}{Xcc}
\toprule
\textbf{Guidance Method} & \textbf{FID}~$\downarrow$ & \textbf{Accuracy}~$\uparrow$ \\
\midrule
Flow Map Lookahead & $\mathbf{36.9} $ & $\mathbf{84.7} $ \\
Denoiser Lookahead & $38.7$ & $82.5$ \\
No Lookahead & $40.7$ & $83.1$ \\
\bottomrule
\end{tabularx}
}
\caption{Test-time guidance results on binarised MNIST for generating digit ``0''. We compare three guidance strategies: using the flow map $X_{t,1}$ for lookahead, using the instantaneous denoiser $\pi_{t,t}$, and standard guidance without lookahead.}
\label{tab:mnist_guidance}
\end{table}

\subsection{Text generation}

\paragraph{Datasets.} We train CFMs over the Text8~\citep{text8} and LM1B~\citep{lm1b} datasets. Text8 is a $100$MB dataset of Wikipedia data in English, and we follow the data setup of~\citet{campbell2024dfm}: we tokenize the text with $27$ tokens for each letter, and a space; numbers are spelled out (\eg $1$ becomes ``one''); and we model sequences long of $256$ tokens at once. LM1B is a dataset composed of one billion words in English, for which we set up the data following~\citet{duo}: we tokenize the sequences with the \verb|bert-base-uncased| tokenizer, containing about $30$K tokens, with sequences of $128$ tokens long, and using sequence packing. For the evaluation of our NLLs and Gen-PPLs, we sample $512$ samples from each model respectively, and report the average obtained over these.

\paragraph{Baselines.} The only comparable baseline available to the best of our knowledge is Duo~\citep{duo}, which unfortunately only provides checkpoints, and Generative-Perplexity (Gen-PPL) results for the OpenWebText (OWT)~\citep{owt} dataset only---which is computationally much more demanding. We therefore compare against DFM~\citep{campbell2024dfm} and UDLM~\citep{schiff2025simpleguidanceudlm}, for Text8 and LM1B, respectively. While both are not few-step sampling methods, they allow us to show that we can outperform these in the low NFE regime (down to a single step), sometimes matching the performance of that for high NFEs (up to $1024$). We use the Gen-PPL of a large pre-trained model: GPT-J-6B~\citep{gptj} for Text8 and \verb|gpt2-large|~\citep{gpt2}, as in the aforementioned papers.

\paragraph{Architecture.} A crucial aspect of the text experiments is the underlying neural network architecture. While we base our model based on that of~\citet{sedd}, as in~\citet{duo,mdlm}, we found that several improvements were required. First of all, we included the latest JVP kernel for forward automatic differentiation of fast attention kernels, developed in~\citet{zhou2025terminalvelocitymatching}, which we found to reduce the memory footprint drastically (by up to approx. $50\%$). We also found that rewriting the architecture to have better compatibility with compilation features of PyTorch to allow for some noticeable speed-ups (up to $10\%$ in iterations/sec). Finally, we found that changing the embedding layer was of crucial importance, as the softmax applied in~\citet{duo} caused our model to severely underfit. We discuss our choices for that layer in~\appref{app:dit_embedding_layer}. For Text8, our model contains $100$M parameters, and $140$M for LM1B.

\paragraph{Results.} Our results are available in~\Cref{fig:text8_nll} for Text8, and in~\Cref{fig:lm1b_ppl} for LM1B. For Text8, we can see that the NLL offered by ECLD outperforms that of DFM for all NFEs, while keeping a relatively close entropy, given the high dimensionality of the token space. As for LM1B, we first note that the Gen-PPL is naturally higher, because of the difference in distribution with the training set of GPT-2, which is closer to OWT.\footnote{A random batch without sequence packing obtains a Gen-PPL of about $100$.} Secondly, to ensure a fair comparison, we evaluate both models using the exact same evaluation pipeline, based on the publicly available from~\citet{schiff2025simpleguidanceudlm}. Under these identical conditions, CFM outperforms UDLM ~\citep{schiff2025simpleguidanceudlm} in the low NFE regime, up to $3.5\times$ lower PPL. We note that the UDLM numbers obtained via this pipeline differ from those reported in the original paper, likely due to differences in evaluation configuration, which unfortunately we cannot directly access as the code is not publicly available. However, the relative ranking remains robust under our controlled setting. An important hindrance is the significantly lower entropy of our method on LM1B. We do not observe a qualitative mode collapse, and believe that the tuning of the hyperparameters and architecture, alongside longer training would alleviate this, as it did for our Text8 experiments. We provide some qualitative samples for our one-step generation in~\appref{app:text_samples}.

\begin{figure}[t]
    \centering
    \hspace*{-1cm}
    \includegraphics[width=0.92\linewidth]{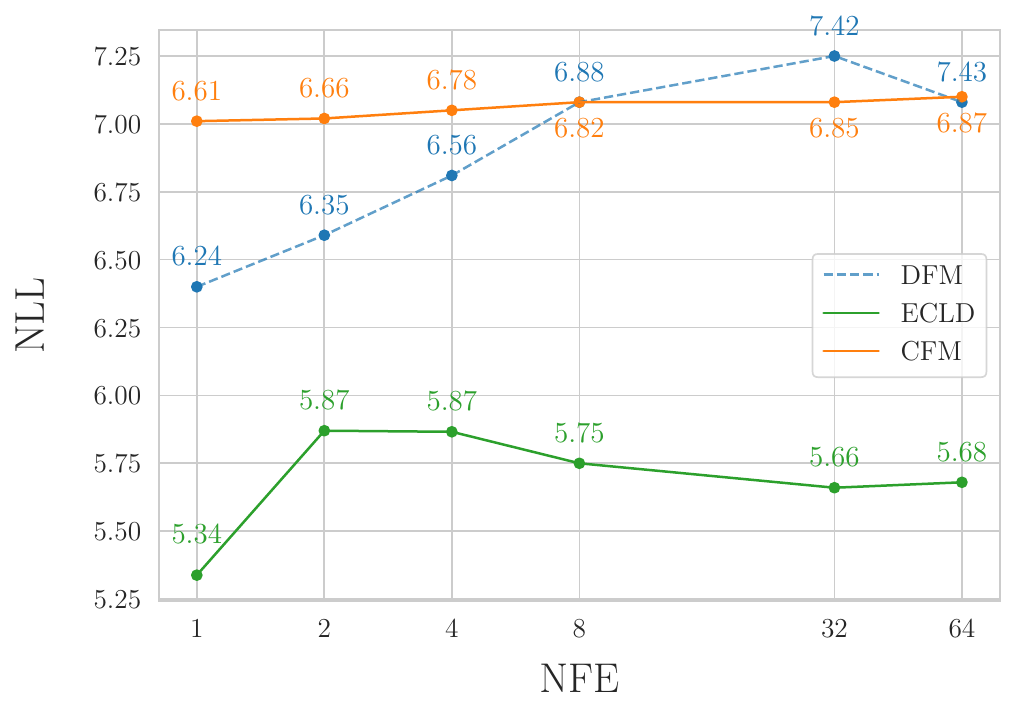}
    \caption{Text8: NFE against NLL as measured by GPT-J-6B. The numbers above the points are the corresponding entropies on the token space of the GPT model.}
    \label{fig:text8_nll}
\end{figure}

\begin{figure}[t]
    \centering
    \hspace*{-1cm}
    \includegraphics[width=0.92\linewidth]{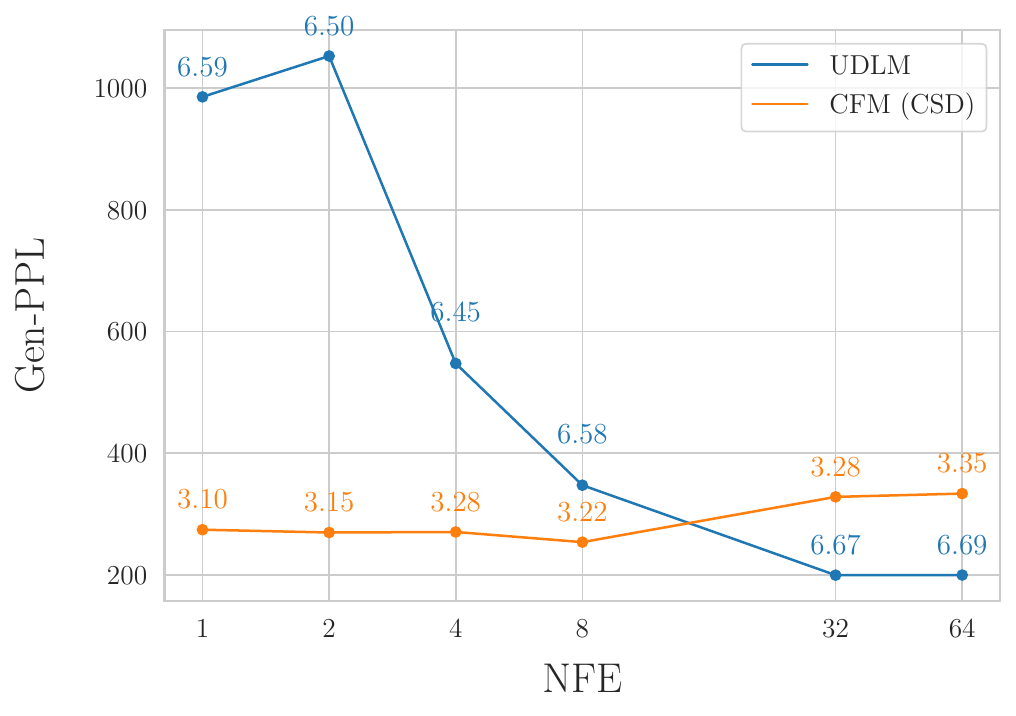}
    \caption{LM1B: NFE against Gen-PPL as measured by GPT-2. The numbers above are the corresponding entropies on the token space of the GPT model.}
    \label{fig:lm1b_ppl}
\end{figure}

\section{Related Works}
\paragraph{Discrete diffusion.} First defined in its current and general formulation in~\citet{d3pm}, discrete diffusion noises a data distribution, $p_\mathrm{data}\equiv p_0$, in time to a prior distribution, $p_1$, independent of $p_0$, defining a probability path over categorical distributions, $(p_t)_{t\in[0,1]}$. It is then possible to simulate the reverse continuous-time Markov chain (CTMC), bringing $p_1$ back to $p_0$, knowing the distribution of $p_{0\mid t}$. When the prior is masked ($p_1 = \delta_{\mathbf{e}_M}$), the method is referred to as masked diffusion and the objective reduces to a simple cross-entropy loss~\citep{mdlm,shi2025md4}. Similarly, for a masked prior, one can instead learn the reverse transition rate matrix~\citep{campbell2024dfm}, or the discrete vector field from a flow matching formulation~\citep{gat2024dfm}---both essentially reducing to cross-entropy-based losses---to reverse the same CTMC (or simulate it forward in flow matching). When the prior is uniform ($p_1 = \mathbf{1}/K$, for $K > 1$ categories)~\citep{schiff2025simpleguidanceudlm}, a more involved objective is available. SEDD~\citep{sedd} proposes a score-based perspective, solidifying the concrete score generalisation from~\citet{meng2023concretescorematching}.

\paragraph{Continuously-parametrised discrete diffusion.} Instead of simulating discrete Markov chains, a host of works considered evolving a flow on a continuous space, typically on $\R^K$ and one-hot encoding the data points, the target distribution, $p_1$, remaining discrete.~\citet{stark2024dfm} claim that linear paths supported on the simplex are limited and thus construct paths based on the Dirichlet distributions, but therefore introduces terms unstable in higher dimensions and fails to scale.~\citet{fisherflow,categoricalflowmatching} endow the simplex with the Fisher-Rao metric and leverage the isometry to the scaled positive orthant of the sphere, which remains more stable in high dimensions.~\citet{dunn2024mixed}, similarly to our work, completely discard the simplex structure, and use a Gaussian prior, but their endpoint-based training uses an MSE objective, hence underperforming our more principled method and lacking self-distillation from scratch methods. In general, the aforementioned methods underperform their fully discrete counterparts.

\paragraph{Accelerated methods.} While diffusion and flow matching offered a leap in performance on continuous data, their inference is typically expensive, requiring many inference steps for high quality generation. Many works since then have established ways of training \emph{from scratch} models that generate high quality samples in as few as one or two steps~\citep{song2023consistencymodels,song2023improvedtechniquestrainingconsistency,lu2025simplifying,geng2025meanflows,imm,alignyourflow,boffi2025buildconsistencymodel,kim2024consistencytrajectorymodels}.

For discrete diffusion, by default, it is hard to construct a consistency-like training objective, since the noisy samples are, at all times $t$, \emph{sampled} from $p_{t\mid 0,1}$, introducing stochasticity, especially near time \emph{one}. Moreover, at inference, each step samples a subset of the elements of the sequence \emph{independently}, provably leading to low quality generation~\citep{hayakawa2025distillationdiscretediffusiondimensional,parallelbench}.~\citet{duo} show how to partially eliminate the stochasticity, but the quality generation is relatively low for very few ($< 8$) steps, and the distillation requires a second stage.

\section{Conclusion}
We introduced \emph{Categorical Flow Maps}, a self-distillable flow-based framework for accelerated generation of discrete data. CFMs build on a variational, endpoint-prediction view of flow matching, and learn a \emph{flow map} that transports a simple \emph{continuous} prior to a \emph{discrete} target while remaining constrained to the probability simplex. This endpoint parametrisation yields a likelihood-aligned learning signal (cross-entropy for categorical endpoints) and, crucially, makes flow-map self-distillation applicable in discrete domains. Moreover, beyond the standard Lagrangian objective, we derived an \emph{endpoint-consistency} self-distillation objective (ECLD), which controls the Lagrangian residual while operating directly on simplex-valued endpoint predictions.

Empirically, CFMs deliver strong few-step generation performance across multiple categorical generative tasks, and we further demonstrate that our continuous flow map formulation enables natural test-time steering via standard guidance and reweighting machinery adapted to simplex-valued trajectories. Overall, our work provides a practical route to scalable, self-distillable generative modelling for discrete data, bridging the gap between accelerated continuous-domain flow maps and categorical generation.

\section*{Acknowledgements}
OD is funded by both Project CETI and Intel. FE, DR, and JWvdM acknowledge the Bosch Center for Artificial Intelligence.
This research is partially supported by the EPSRC Turing AI World-Leading Research Fellowship No. EP/X040062/1 and EPSRC AI Hub No. EP/Y028872/1.

\section*{Impact statement}
This paper presents work whose goal is to advance the field
of Machine Learning. There are many potential societal
consequences of our work, none which we feel must be
specifically highlighted here.

\bibliography{main}
\bibliographystyle{icml2026}

\newpage
\appendix
\onecolumn
\section{Additional background}
\subsection{Flow map matching via self-distillation}
\label{app:flow_map_background}
We give a brief review of flow map matching learning via self-distillation, summarising the work of~\citet{boffi2025buildconsistencymodel}. Recall that a flow map is the map, $X:[0,1]^2\times\R^d\to\R^d$, which, for any solution of the probability flow, and any two times $(s,t) \in [0,1]^2$, satisfies the \emph{jump condition}: $X_{s,t}(x_s) = x_t$. We can easily deduce the three characterisations from~\citet{boffi2025buildconsistencymodel} from this simple property.

\paragraph{Lagrangian condition.} To derive the Lagrangian condition, we can first show the so-called ``tangent condition''. From the vector field parametrisation, we have that
\begin{equation}
    x_t = X_{s,t}(x_s) = x_s + (t-s)v_{s,t}(x_s) \implies \frac{x_t - x_s}{t-s} = v_{s,t}(x_s),
\end{equation}
for any times $s \neq t$. From the limit as $s$ tends to $t$, it then becomes clear that $\partial_t x_t = b_t(x_t) = v_{t,t}(x_t)$. Then, since $X_{s,t}(x_s) = x_t$, by differentiating w.r.t. $t$, we also get that $\partial_t X_{s,t}(x_s) = b_t(x_t) = v_{t,t}(x_t) = v_{t,t}(X_{s,t}(x_s))$. Thus arises Lagrangian self-distillation (LSD):
\begin{equation}
    \mathcal{L}_{\mathrm{LSD}}(\theta) \coloneqq \mathbb{E}\left\lVert \partial_t X_{s,t}^\theta(x_s) - v_{t,t}^\theta(X_{s,t}(x_s))\right\rVert^2.
\end{equation}
\citet{boffi2025buildconsistencymodel,davis2025gfm} recommend placing a stop-gradient operator on $v_{t,t}^\theta(X_{s,t}^\theta(x_s))$.

\paragraph{Eulerian condition.} We have that, for any times $s$ and $t$, $X_{t,s}\circ X_{s,t} = \mathrm{Id}$. It suffices then to take the partial derivative with respect to $s$ of this relation to find that the flow map satisfies
\begin{equation}
     \partial_s X_{s,t}(x_s) + \nabla X_{s,t}(x_s)v_{s,s} = 0.
\end{equation}
This naturally gives rise to the Eulerian self-distillation (ESD) loss:
\begin{equation}
    \mathcal{L}_{\mathrm{ESD}}(\theta) \coloneqq \mathbb{E}\left\lVert \partial_s X_{s,t}^\theta(x_s) + \nabla X_{s,t}^\theta(x_s)v_{s,s}^\theta(x_s)\right\rVert^2.
\end{equation}
Following~\citet{boffi2025buildconsistencymodel,davis2025gfm}, a stop-gradient operator should be placed on the Jacobian term.

\paragraph{Semigroup condition.} The flow map has a semigroup structure in time: for $(s,u,t)\in[0,1]^3$, it is easy to see that
\begin{equation}
    X_{u, t}\circ X_{s,u} = X_{s,t},
\end{equation}
whence the ``Progressive'' self-distillation (PSD) optimisation objective:
\begin{equation}
    \mathcal{L}_{\mathrm{PSD}}(\theta) \coloneqq \mathbb{E}\left\lVert X_{s,t}^\theta(x_s) - X_{u,t}^\theta(X_{s,u}^\theta(x_s))\right\rVert^2.
\end{equation}
Once more, according to~\citet{boffi2025buildconsistencymodel,davis2025gfm}, a stop-gradient operator should be placed on the two-step term.

\subsection{General affine interpolation}
\label{app:general_case}
The straight-line interpolation $x_t = tx_1 + (1-t)x_0$ is a specific case of the more general family of affine interpolations
\begin{equation}
    I_t(x_0, x_1) := \alpha_t x_0 + \beta_t x_1,
\end{equation}
where $\alpha_t$ and $\beta_t$ are differentiable scalar schedules (chosen such that $X_t$ interpolates from
$X_0$ to $X_1$ as $t$ goes from $0$ to $1$).

The marginal velocity field, the conditional expectation of the pathwise time derivative, is then given by
\begin{equation}
\label{eq:vel_expectation_general}
    b_t(x) :=
    \mathbb{E}\!\left[ \dot{\alpha}_t x_0 + \dot{\beta}_t x_1 \,\middle|\, I_t = x_t \right].
\end{equation}
In this more general case, the conditional expectation is taken under the posterior probability path $p_t(x_0,x_1|x_t)$. 
The derivative of the affine interpolant gives the pathwise derivative 
\begin{equation}
    \partial_t x_t
    =
    \dot{\alpha}_t x_0 + \dot{\beta}_t x_1
    =
    \frac{\dot{\alpha}_t}{\alpha_t} x_t
    +
    \left(
        \dot{\beta}_t - \beta_t \frac{\dot{\alpha}_t}{\alpha_t}
    \right) x_1,
\end{equation}
where we substituted $x_0 = (x_t - \beta_t x_1)/\alpha_t$. Taking conditional expectations given $I_t=x_t$
gives a convenient representation of the marginal velocity field:
\begin{equation}
\label{eq:vfm_velocity_}
    b_t(x_t)
    =
    \frac{\dot{\alpha}_t}{\alpha_t} x_t
    +
    \left(
        \dot{\beta}_t - \beta_t \frac{\dot{\alpha}_t}{\alpha_t}
    \right)\mu_{1\mid t}(x_t),
\end{equation}
where we define
\begin{equation}
    \mu_{1\mid t}(x_t) := \mathbb{E}[X_1 \mid I_t = x_t].
\end{equation}

In the general affine case we parametrise the average velocity as 
\begin{equation}
v^{\theta}_{s,t}(x_s)
~:=~
\frac{\dot{\alpha}_s}{\alpha_s} x_s
    +
    \left(
        \dot{\beta}_s - \beta_s \frac{\dot{\alpha}_s}{\alpha_s}
    \right)\pi^{\theta}_{s,t}(x_s).
\end{equation}

\subsection{Variational Perspective}

Following \citet{vfm}, we can now treat learning the flow as an inference problem, where we aim to learn the implicit posterior  induced by the probability path, i.e. $p_t(x_1 \mid x_t)$, through the following objective
\begin{equation}
    \mathcal{L}(\theta) := \mathbb{E}_{t, x_t} \left[\mathrm{KL} \left(p_t(x_1 \mid x_t) ~||~ q^{\theta}_t(x_1 \mid x_t)\right) \right],
\end{equation}
which reduces to the standard SI setting when $q^{\theta}(x_1 \mid x_t) = \mathcal{N}\left(x_1 \mid \mu^{\theta}_t( x_t), \beta_t^2 \mathrm{I}\right)$. Through this variational lens, VFM obtains strong performance in various domains, such as molecular generation \citep{vfm, zaghen2025towards, eijkelboom2025controlled},
VQ image generation \citep{maticsan2025purrception}, biological/physical systems \citep{sakalyan2025modeling,  finn2025generative}, and tabular-data synthesis \citep{guzman2025exponential, nasution2025flow}.

\section{Proofs}
\label{app:proof}

\subsection{Endpoint parametrisation properties}
The following proposition and proof simply follows the steps of~\citet{boffi2025buildconsistencymodel}.
\begin{proposition}
\label{app:proof_tangent_lagrangian}
Let $X$ be the flow map for the probability flow, $(x_t)_{t\in[0,1]}$ a solution of the same flow, and $t \neq 1$. Then it holds that
\begin{equation}
    \lim_{s\to t}\partial_t X_{s,t}(x_s)=\frac{\pi_{t,t}(x_t) - x_t}{1-t} = b_t(x_t),
\end{equation}
and that
\begin{equation}
    (1-t)\partial_t X_{s,t}(x_s) = \pi_{t,t}(X_{s,t}(x_s)) - X_{s,t}(x_s).
\end{equation}
\end{proposition}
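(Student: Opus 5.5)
We argue directly from the defining property of the true flow map, $X_{s,t}(x_s)=x_t$ for every solution $(x_t)_{t\in[0,1]}$ of \eqref{eq:probflow}, together with the endpoint parametrisation \eqref{eq:flowmap_endpoint_param} and the VFM form of the drift \eqref{eq:vfm_velocity}. Throughout we fix $t\neq 1$ and take $s$ in a neighbourhood of $t$ with $s\neq 1$, so that every factor $(1-s)^{-1}$ is well defined. We assume $\pi$ is jointly continuous in $(s,t)$ and differentiable in $t$.

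\emph{Tangent condition.} The first step is to differentiate \eqref{eq:flowmap_endpoint_param} in $t$:
\begin{equation*}
\partial_t X_{s,t}(x_s) = \frac{\pi_{s,t}(x_s)-x_s}{1-s} + \frac{t-s}{1-s}\,\partial_t\pi_{s,t}(x_s).
\end{equation*}
As $s\to t$, the second term vanishes because $\partial_t\pi$ is bounded near the diagonal. By continuity of $\pi$ and of $s\mapsto x_s$, the first term tends to $(\pi_{t,t}(x_t)-x_t)/(1-t)$.

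To identify this limit with $b_t(x_t)$, we use the jump condition. Since $X_{s,t}(x_s)=x_t$, we have
\begin{equation*}
\frac{x_t-x_s}{t-s}=\frac{\pi_{s,t}(x_s)-x_s}{1-s}.
\end{equation*}
Letting $s\to t$, the left-hand side tends to $\dot x_t=b_t(x_t)$. Hence $b_t(x_t)=(\pi_{t,t}(x_t)-x_t)/(1-t)$, which is consistent with \eqref{eq:vfm_velocity} under the identification $\pi_{t,t}=\mu_t$.

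\emph{Lagrangian condition.} Differentiating $X_{s,t}(x_s)=x_t$ in $t$ at fixed $s$ gives
\begin{equation*}
\partial_t X_{s,t}(x_s)=\dot x_t=b_t(x_t)=b_t(X_{s,t}(x_s)).
\end{equation*}
Substituting the expression for $b_t$ from the tangent step, evaluated at the point $X_{s,t}(x_s)$, and multiplying through by $1-t$ yields
\begin{equation*}
(1-t)\,\partial_t X_{s,t}(x_s)=\pi_{t,t}(X_{s,t}(x_s))-X_{s,t}(x_s),
\end{equation*}
which is the second claim.

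The main subtlety is the tangent step. We must justify exchanging the limit $s\to t$ with the evaluation at $x_s$, and we must make sure that the identity $b_t=(\pi_{t,t}-\mathrm{Id})/(1-t)$ holds at every point reached by the flow, not only along a single trajectory. The first point follows from continuity of $\pi$. For the second, note that every point $y$ is the time-$t$ position $x_t$ of some solution, namely the one with $x_t=y$, so the identity holds pointwise.
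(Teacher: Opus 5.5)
Your proof is correct and follows the paper's argument: you differentiate the endpoint parametrisation in $t$ and let $s\to t$, then obtain the Lagrangian identity by differentiating the jump condition $X_{s,t}(x_s)=x_t$ and substituting $x_t=X_{s,t}(x_s)$. The one variation is how you identify the limit with $b_t(x_t)$: you use the secant quotient $(x_t-x_s)/(t-s)$, which is the argument in \appref{app:flow_map_background}, whereas the paper simply notes that $\partial_t X_{s,t}(x_s)=b_t(x_t)$ does not depend on $s$; combining the two would let you drop your boundedness assumption on $\partial_t\pi$ near the diagonal.
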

\begin{proof}
For $s,t \neq 1$,
\begin{align*}
\lim_{s\to t}\partial_t X_{s,t}(x_s)  = \lim_{s\to t}\left[ \frac{\pi_{s,t}(x_s) - x_s}{1-s}  + (t-s) \frac{\partial_t \pi_{s,t}(x_s)}{1-s}\right] = \frac{\pi_{t,t}(x_t) - x_t}{1-t}.
\end{align*}
Moreover, $X_{s,t}(x_s) = x_t$, implying $\partial_t X_{s,t}(x_s) = \partial_t x_t = b_t(x_t)$ from the probability flow equation---whence the first equation. Finally, the second equation arises trivially by taking $\partial_t X_{s,t}(x_s) = b_t(x_t) = \frac{\pi_{t,t}(x_t) - x_t}{1-t}$, wherein we can replace $x_t$ by $X_{s,t}(x_s)$ and multiply both sides by $1-t$.
\end{proof}

\subsection{Loss bounds}
\label{app:loss_bounds_proofs}
\begin{proposition}[ECLD controls the Lagrangian residual (linear VFM decoder), reverse KL]
\label{prop:ecld-bound-vfm-linear-revKL_full}
Let $\pi_{s,t}(x) \in \Delta^K$ denote the endpoint predictor, a probability vector parametrising the categorical distribution $\mathrm{Cat}(\cdot \mid \pi_{s,t}(x))$ over one-hot endpoints. We write $\mathrm{KL}(\pi\|\pi')$ as shorthand for $\mathrm{KL}(\mathrm{Cat}(\cdot \mid \pi)\|\mathrm{Cat}(\cdot \mid \pi'))$. Assume $\pi_{t,t}(X_{s,t}(x_s))$ and $\partial_t\pi_{s,t}(x_s)$ are well-defined for all $0\le s\le t<1$
(almost surely under the expectation), and that the losses below are finite.

Assume the linear endpoint-to-velocity decoder
\begin{equation}
\label{eq:linear-decoder-revKL_proof}
v_{t,t}(x)\;=\;\frac{\pi_{t,t}(x)-x}{1-t},
\end{equation}
and the induced flow map
\begin{equation}
\label{eq:linear-flowmap-revKL_proof}
X_{s,t}(x_s)\;=\;x_s+\gamma_{s,t}\big(\pi_{s,t}(x_s)-x_s\big),
\qquad 
\gamma_{s,t}\;=\;\frac{t-s}{1-s}.
\end{equation}
Define the Lagrangian residual
\[
r_{s,t}(x_s)\;\coloneqq\;\partial_t X_{s,t}(x_s)-v_{t,t}\!\big(X_{s,t}(x_s)\big),
\]
and let $w_t\coloneqq (1-t)^{-2}$. Define the losses
\begin{align}
\label{eq:lag-loss-revKL_proof}
\mathcal{L}_{\mathrm{CSD}}
&\;\coloneqq\;
\mathbb{E}\left[w_t\,\big\|(1-t)\,r_{s,t}(x_s)\big\|_2^2 \, \right],\\
\label{eq:ecld-loss-revKL_proof}
\mathcal{L}_{\mathrm{EC}}
&\;\coloneqq\;
\mathbb{E}\left[w_t\,\mathrm{KL}\!\left(\pi_{t,t}(X_{s,t}(x_s))\,\big\|\,\pi_{s,t}(x_s)\right)\right],\\
\label{eq:drift-loss-revKL_proof}
\mathcal{L}_{\mathrm{TD}}
&\;\coloneqq\;
\mathbb{E}\left[\gamma_{s,t}^2\,\big\|\partial_t \, \pi_{s,t}(x_s)\big\|_2^2\right].
\end{align}
Then
\begin{equation}
\label{eq:ecld-bound-revKL_proof}
\mathcal{L}_{\mathrm{CSD}}
\;\le\;
4\,\mathcal{L}_{\mathrm{EC}}
\;+\;
2\,\mathcal{L}_{\mathrm{TD}}.
\end{equation}
\end{proposition}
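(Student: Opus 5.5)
Our plan is to compute the residual explicitly, split it into two terms, and bound each term separately.

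\textbf{Step 1: differentiate the flow map.} Differentiating \eqref{eq:linear-flowmap-revKL_proof} in $t$, and using $\partial_t\gamma_{s,t}=1/(1-s)$, gives
\[
\partial_t X_{s,t}(x_s)=\frac{\pi_{s,t}(x_s)-x_s}{1-s}+\gamma_{s,t}\,\partial_t\pi_{s,t}(x_s).
\]

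\textbf{Step 2: eliminate $x_s$.} Write $X\coloneqq X_{s,t}(x_s)$. The key algebraic identity is $X-x_s=\gamma_{s,t}(\pi_{s,t}-x_s)$. Hence
\[
\pi_{s,t}-X=(1-\gamma_{s,t})(\pi_{s,t}-x_s)=\tfrac{1-t}{1-s}(\pi_{s,t}-x_s),
\]
so that $\frac{\pi_{s,t}-x_s}{1-s}=\frac{\pi_{s,t}-X}{1-t}$.

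\textbf{Step 3: decompose the residual.} Substituting into $r_{s,t}$ and multiplying by $(1-t)$, the $X$ terms cancel:
\[
(1-t)\,r_{s,t}=\big(\pi_{s,t}(x_s)-\pi_{t,t}(X)\big)+(1-t)\,\gamma_{s,t}\,\partial_t\pi_{s,t}(x_s).
\]
The residual is thus an endpoint-mismatch term plus a drift term.

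\textbf{Step 4: split with $(a+b)^2\le 2a^2+2b^2$.} Applying this inequality and the weight $w_t=(1-t)^{-2}$ gives
\[
w_t\|(1-t)r\|^2\le 2w_t\|\pi_{s,t}-\pi_{t,t}(X)\|_2^2+2\gamma_{s,t}^2\|\partial_t\pi_{s,t}\|_2^2.
\]
The drift piece: the $(1-t)^2$ from the residual cancels $w_t$, leaving exactly the integrand of $2\,\mathcal{L}_{\mathrm{TD}}$.

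\textbf{Step 5: bound the endpoint piece by the KL.} Both endpoint predictions lie in $\Delta^K$. Since $\|\cdot\|_2\le\|\cdot\|_1=2\,\mathrm{TV}$, Pinsker's inequality gives
\[
\|p-q\|_2^2\le\|p-q\|_1^2\le 2\,\mathrm{KL}(p\|q).
\]
We apply this with $p=\pi_{t,t}(X)$ and $q=\pi_{s,t}(x_s)$, matching the KL direction in $\mathcal{L}_{\mathrm{EC}}$; Pinsker holds in either direction, so the orientation is harmless. This yields $2w_t\|\cdot\|^2\le 4w_t\,\mathrm{KL}$. Taking expectations then gives \eqref{eq:ecld-bound-revKL_proof}.

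\textbf{Main obstacle.} The delicate part is Steps 2--3: getting the residual to cancel cleanly and tracking the $(1-t)$ versus $(1-s)$ factors. Only with the right bookkeeping do the drift term's weight reduce to exactly $\gamma_{s,t}^2$ and the endpoint term's weight to exactly $w_t$. The Pinsker step is standard, provided we use $\|\cdot\|_2\le\|\cdot\|_1$ to obtain the constant 2.
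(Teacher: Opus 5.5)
Your proof is correct and follows the paper's route: differentiate the flow map, obtain the exact decomposition $(1-t)\,r_{s,t}=\big(\pi_{s,t}(x_s)-\pi_{t,t}(X_{s,t}(x_s))\big)+(1-t)\gamma_{s,t}\,\partial_t\pi_{s,t}(x_s)$, split with $\|u+v\|_2^2\le 2\|u\|_2^2+2\|v\|_2^2$, and bound the endpoint term via $\|\cdot\|_2\le\|\cdot\|_1$ and Pinsker. The only cosmetic difference is how the $x_s$ terms cancel. You use $\pi_{s,t}-X_{s,t}=\tfrac{1-t}{1-s}(\pi_{s,t}-x_s)$, whereas the paper collects the coefficients directly via $\tfrac{1-t}{1-s}+\gamma_{s,t}=1$.
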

\begin{proof}
We first derive an exact decomposition of the scaled residual.
Using \eqref{eq:linear-decoder-revKL_proof}, we have $(1-t)\,v_{t,t}(x)=\pi_{t,t}(x)-x$, hence
\begin{equation}
\label{eq:scaled-residual-start-revKL_proof}
(1-t)\,r_{s,t}(x_s)
=(1-t)\,\partial_t X_{s,t}(x_s)+X_{s,t}(x_s)-\pi_{t,t}(X_{s,t}(x_s)).
\end{equation}
From \eqref{eq:linear-flowmap-revKL_proof} and $\partial_t \gamma_{s,t}=(1-s)^{-1}$,
\[
\partial_t X_{s,t}(x_s)
=\partial_t\gamma_{s,t}\,\big(\pi_{s,t}(x_s)-x_s\big)+\gamma_{s,t}\,\partial_t\pi_{s,t}(x_s)
=\frac{\pi_{s,t}(x_s)-x_s}{1-s}+\gamma_{s,t}\,\partial_t\pi_{s,t}(x_s).
\]
Multiplying by $(1-t)$ and substituting into \eqref{eq:scaled-residual-start-revKL_proof} gives
\begin{align}
(1-t)\,r_{s,t}(x_s)
&=
\left[\frac{1-t}{1-s}\big(\pi_{s,t}(x_s)-x_s\big)+x_s+\gamma_{s,t}\big(\pi_{s,t}(x_s)-x_s\big)\right]
-\pi_{t,t}(X_{s,t}(x_s))
+(1-t)\gamma_{s,t}\,\partial_t\pi_{s,t}(x_s).
\end{align}
The bracketed term simplifies since
\[
x_s+\left(\frac{1-t}{1-s}+\gamma_{s,t}\right)\big(\pi_{s,t}(x_s)-x_s\big)
=
x_s+\left(\frac{1-t}{1-s}+\frac{t-s}{1-s}\right)\big(\pi_{s,t}(x_s)-x_s\big)
=
x_s+\big(\pi_{s,t}(x_s)-x_s\big)
=\pi_{s,t}(x_s).
\]
Therefore,
\begin{equation}
\label{eq:decomposition-revKL_proof}
(1-t)\,r_{s,t}(x_s)
=
\underbrace{\pi_{s,t}(x_s)-\pi_{t,t}(X_{s,t}(x_s))}_{=:a_{s,t}(x_s)}
+
\underbrace{(1-t)\gamma_{s,t}\,\partial_t\pi_{s,t}(x_s)}_{=:b_{s,t}(x_s)}.
\end{equation}

Next, apply $\|u+v\|_2^2\le 2\|u\|_2^2+2\|v\|_2^2$ to \eqref{eq:decomposition-revKL_proof}:
\[
\big\|(1-t)\,r_{s,t}(x_s)\big\|_2^2
\le
2\|a_{s,t}(x_s)\|_2^2
+
2\|b_{s,t}(x_s)\|_2^2.
\]

Multiplying by $w_t=(1-t)^{-2}$ yields
\begin{equation}
\label{eq:after-weight-revKL_proof}
w_t\,\big\|(1-t)\,r_{s,t}(x_s)\big\|_2^2
\le
2w_t\,\|a_{s,t}(x_s)\|_2^2
+
2\gamma_{s,t}^2\,\big\|\partial_t\pi_{s,t}(x_s)\big\|_2^2,
\end{equation}
since $w_t(1-t)^2=1$.

It remains to upper bound the endpoint term by the \emph{reverse} KL. Let
$p=\pi_{s,t}(x_s)$ and $q=\pi_{t,t}(X_{s,t}(x_s))$. Using $\|x\|_2\le \|x\|_1$ and Pinsker's inequality
applied to $\mathrm{KL}(q\|p)$, we have
\[
\|p-q\|_2^2 \le \|p-q\|_1^2 = \|q-p\|_1^2 \le 2\,\mathrm{KL}(q\|p).
\]
Hence,
\begin{equation}
\label{eq:pinsker-step-revKL_proof}
\|a_{s,t}(x_s)\|_2^2
=
\|p-q\|_2^2
\le
2\,\mathrm{KL}\!\left(\pi_{t,t}(X_{s,t}(x_s))\,\big\|\,\pi_{s,t}(x_s)\right).
\end{equation}
Substituting \eqref{eq:pinsker-step-revKL_proof} into \eqref{eq:after-weight-revKL_proof} and taking expectations gives
\[
\mathcal{L}_{\mathrm{CSD}}
\le
4\,\mathcal{L}_{\mathrm{EC}}
+
2\,\mathcal{L}_{\mathrm{TD}},
\]
which is \eqref{eq:ecld-bound-revKL_proof}.
\end{proof}

\section{Endpoint parametrisation}
\label{app:endpoint_param}
\cut{Flow map distillation takes large jumps through state space. On domains with constrained data support---such as one-hot vectors at simplex vertices---this raises a question: what guarantees that the trajectory terminates at a valid configuration? In standard flow matching, the learnt velocity field $v : \R\times \mathbb{R}^D \to \mathbb{R}^D$ is unconstrained, and thus cannot ensure this property; convergence to the data support must be learned entirely from data.

Our parametrisation resolves this by construction. The flow map $X_{s,t}$ is defined through an endpoint predictor $\pi_{s,t}(x) \in \Omega$, so the velocity direction is forced to point toward the data support rather than learned freely in $\mathbb{R}^D$. This makes the transport "safe" by design: the trajectory $(X_{s,t}(x_s))_{t \in [s,1]}$ is constrained to lie within the conic hull of $\Omega$ as seen from $x_s$. We formalize this as \emph{geometric confinement}.
}
While it also possible, in theory, to learn the vector field bringing our continuous prior to our discrete target distribution, we have found endpoint parametrisation to be necessary in practice, as shown in the experiments. We provide some intuition as to why this may occur.

First of all, the true vector field may be degenerate to compute for times near $t = 1$. Indeed, since the prior is continuous (a non-atomic measure), and the target distribution discrete (a purely atomic measure), the true vector field solution to the probability flow is degenerate: it needs to concentrate mass on a few atoms (the vertices of the simplex). We argue that it renders the task of learning it more difficult.

Secondly, learning a vector field consists in learning a general map from $\R^d$ to $\R^d$, that is completely unconstrained. Therefore, the support of our imperfectly learnt distribution at time one may not be exactly that of the target distribution without any changes. In our case, our endpoint predictions being naturally constrained to the probability simplex, at time one, we are certain to land on $\Omega$ by construction. We formalise this intuition in the following proposition, which shows that, at all times, our learnt flow map transitions always lie in the ``correct'' subspace at all times.



\begin{proposition}[Geometric Confinement]
Let $\Omega$ be the convex support of the data distribution. Define the (truncated) cone subtended by $\Omega$ at apex $x_s$ as the union of line segments connecting $x_s$ to $\Omega$:
\begin{equation}
    \mathcal{C}(x_s,\Omega)
    := \left\{(1-\gamma)x_s + \gamma y \;\middle|\; y\in\Omega,\ \gamma\in[0,1]\right\}.
\end{equation}
For the affine interpolant, the parametrised variational flow map $X^{\theta}_{s,t}(x_s)$ satisfies
\begin{equation}
    X^{\theta}_{s,t}(x_s)\in \mathcal{C}(x_s,\Omega)
    \qquad\text{for all } t\in[s,1].
\end{equation}
Moreover, for $t\in[s,1]$ the point lies on the segment from $x_s$ to a point in $\Omega$.
\end{proposition}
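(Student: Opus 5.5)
The approach is direct substitution into the endpoint parametrisation \eqref{eq:flowmap_endpoint_param}. Rewrite the flow map in the form used in the proof of \Cref{prop:ecld-bound-vfm-linear-revKL_full}:
\[
X^\theta_{s,t}(x_s) = (1-\gamma_{s,t})\,x_s + \gamma_{s,t}\,\pi^\theta_{s,t}(x_s), \qquad \gamma_{s,t} = \frac{t-s}{1-s}.
\]
Here $\pi^\theta_{s,t}$ takes values in $\Omega$ by construction; in the categorical case it is a softmax output, so $\Omega=\Delta^K$, which is convex. Set $y := \pi^\theta_{s,t}(x_s)\in\Omega$ and $\gamma := \gamma_{s,t}$. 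Then $X^\theta_{s,t}(x_s)$ is literally of the form $(1-\gamma)x_s+\gamma y$, so membership in $\mathcal{C}(x_s,\Omega)$ reduces to checking $\gamma\in[0,1]$.

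For $s<1$ and $t\in[s,1]$, we have $0\le t-s\le 1-s$, hence $\gamma_{s,t}\in[0,1]$. The value is $0$ at $t=s$, recovering the boundary condition $X_{s,s}=\mathrm{Id}$, and $1$ at $t=1$, where $X^\theta_{s,1}(x_s)=\pi^\theta_{s,1}(x_s)\in\Omega$. This also gives the "moreover" clause: for each fixed $t$, the point lies on the segment $[x_s,y]$ with $y=\pi^\theta_{s,t}(x_s)\in\Omega$. The endpoint $y$ may depend on $t$, but the statement only requires some point of $\Omega$. Convexity of $\Omega$ is not needed for these two claims; it only ensures $\mathcal{C}$ is the natural cone, which is a union of segments to a convex set.

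For the general affine interpolant of \appref{app:general_case}, I would check that the analogous parametrisation still yields a convex combination of $x_s$ and $\pi^\theta_{s,t}(x_s)$ with a coefficient in $[0,1]$. The natural choice is $X_{s,t}(x_s)=\frac{\alpha_t}{\alpha_s}x_s+\bigl(\beta_t-\frac{\alpha_t\beta_s}{\alpha_s}\bigr)\pi_{s,t}(x_s)$, obtained by transporting along the conditional path with $x_1$ replaced by the prediction. This becomes a convex combination only if $\frac{\alpha_t}{\alpha_s}+\beta_t-\frac{\alpha_t\beta_s}{\alpha_s}=1$, which holds when $\alpha+\beta\equiv 1$. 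Otherwise the statement needs an interpretation up to scaling.

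This general case is the main obstacle. The paper's general-affine parametrisation is written only for the velocity, so the proof would need to either restrict to $\alpha_t+\beta_t=1$ with $\alpha$ decreasing (which gives $\gamma\in[0,1]$ by monotonicity) or adopt the flow-map form above. I would state the result for the straight-line case in full and treat the affine case under that assumption.
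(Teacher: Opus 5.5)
Your argument is the same as the paper's: rewrite $X^\theta_{s,t}(x_s)=(1-\gamma)x_s+\gamma\,\pi^\theta_{s,t}(x_s)$ with $\gamma=(t-s)/(1-s)\in[0,1]$, and read off that the point lies on the segment from $x_s$ to $\pi^\theta_{s,t}(x_s)\in\Omega$. Your caveat about general affine schedules is also correct: the paper's proof takes the straight-line form as ``the'' affine-interpolant flow map, so it literally covers only $\alpha_t+\beta_t\equiv 1$, where $\gamma=1-\alpha_t/\alpha_s$, as you note.
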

\begin{proof}
For the affine interpolant, the flow map admits the explicit form
\begin{equation}
    X_{s,t}(x_s)
    = x_s + (t-s)\frac{\pi_{s,t}(x_s)-x_s}{1-s},
\end{equation}
where $\pi_{s,t}(x_s) \in \Omega$. Define $\gamma := \frac{t-s}{1-s}$. For $t\in[s,1]$ we have $\gamma\in[0,1]$, and we can rewrite
\begin{equation}
    X_{s,t}(x_s) = (1-\gamma)x_s + \gamma\,\pi_{s, t}(x_s).
\end{equation}
Thus $X_{s,t}(x_s)$ lies on the line segment joining $x_s$ and $\pi_{s, t}(x_s) \in\Omega$, which by definition is contained in $\mathcal{C}(x_s,\Omega)$.
\end{proof}

\section{Algorithms}

Our training procedure follows \citet{boffi2025buildconsistencymodel}, replacing the flow matching loss with the variational flow matching loss $\mathcal{L}_{\mathrm{inf}}$, and the Lagrangian self-distillation objective with our (Lagrangian) Categorical self-distillation objective $\mathcal{L}_{\mathrm{CSD}}$. As shown in Algorithm~\ref{alg:training}, we sample a fraction $\eta$ of the batch from the diagonal $s = t$ to compute the VFM loss, which ensures the model learns accurate endpoint predictions. The remaining batch samples time pairs $(s, t)$ to compute the distillation loss, which enforces self-consistency across the trajectory. We provide PyTorch-style pseudocode for both loss computations in Algorithms~\ref{alg:csd_loss} and~\ref{alg:ecld_loss}. 

At inference time (Algorithm~\ref{alg:sampling}), we initialise $x_0$ from the base distribution and iteratively transport it toward the data distribution using the learned flow map. At each step, we move from $x_{t_i}$ to $x_{t_{i+1}}$ by applying the partial denoiser $\pi^\theta_{1|t_i, t_{i+1}}$ with step size scaled by $\frac{t_{i+1} - t_i}{1 - t_i}$. The final output can be obtained either by taking the argmax of $x_1$ for deterministic sampling, or by drawing a categorical sample for stochastic generation.

For numerical stability, in our graph experiments, we clamp denominators involving $(1-s)$ and $(1-t)$ to a small positive value, and disable mixed-precision arithmetic within the JVP computation.

\begin{algorithm}[h]
\caption{Sampling with learned categorical flow maps}
\label{alg:sampling}
\begin{algorithmic}[1]
\REQUIRE Trained network $\pi^\theta_{s,t}$, time discretization $0 = t_0 < t_1 < \cdots < t_N = 1$
\STATE Sample $x_0 \sim p_0$
\FOR{$i = 0, \ldots, N-1$}
    \STATE $x_{t_{i+1}} \gets x_{t_i} + \frac{t_{i+1} - t_i}{1 - t_i}\bigl(\pi^\theta_{1|t_i, t_{i+1}}(x_{t_i}) - x_{t_i}\bigr)$
\ENDFOR
\STATE \textbf{return} $\arg\max(x_1)$ \COMMENT{or sample $\hat{x}_1 \sim \mathrm{Cat}(x_1)$}
\end{algorithmic}
\end{algorithm}

\begin{algorithm}[h]
\caption{Categorical Self-Distillation (CSD) training}
\label{alg:training}
\begin{algorithmic}[1]
\REQUIRE Dataset $\mathcal{D}$; batch size $M$; diagonal fraction $\eta \in (0,1]$;  time distribution $\mathcal{T}$ on $\{(s,t): 0 \le s \le t \le 1\}$; distillation loss $\mathcal{L}_{\mathrm{D}} \in \{\mathcal{L}_{\mathrm{CSD}}, \mathcal{L}_{\mathrm{ECLD}}\}$
\REPEAT
    \STATE Sample $M_d = \lfloor \eta M \rfloor$ pairs $(x_0^i, x_1^i) \sim \rho(x_0, x_1)$ and times $(s_i, t_i) \sim \mathcal{T}$
    \STATE Compute interpolants $x_{t_i} = (1-t_i) x_0^i + t_i x_1^i$
    \STATE $\mathcal{L}_{\mathrm{inf}} \gets -\frac{1}{M_d} \sum_{i=1}^{M_d} \log q^\theta_{t_i, t_i}(x_1^i \mid x_{t_i})$ \COMMENT{cross-entropy with $\pi^\theta_{t_i,t_i}(X_{t_i})$}
    \STATE Sample $M_o = M - M_d$ pairs $(x_0^j, x_1^j) \sim \rho(x_0, x_1)$ and times $(s_j, t_j) \sim \mathcal{T}$
    \STATE Compute interpolants $X_{s_j} = (1-s_j) x_0^j + s_j x_1^j$
    \STATE Compute distillation loss $\frac{1}{M_o} \sum_{j=1}^{M_o} \mathcal{L}_{\mathrm{D}}^{s_j, t_j}(v^\theta)$
    \STATE Update $\theta$ using $\nabla(\mathcal{L}_{\mathrm{inf}} +  \mathcal{L}_{\mathrm{D}})$
\UNTIL{converged}
\STATE \textbf{output:} Trained flow map $X_{s,t}(x) = x + \frac{t-s}{1-s}(\pi^\theta_{s,t})$
\end{algorithmic}
\end{algorithm}

\begin{algorithm}[h]
\caption{CSD Loss Computation}
\label{alg:csd_loss}
\begin{algorithmic}[1]
\STATE \textbf{Input:} data $x_1$, noise $x_0$, network $f_\theta$, timesteps $s,t$
\STATE $z_s \gets (1-s)\, x_0 + s\, x_1$
\STATE $(X_{s,t},\, \partial_t X_{s,t}) \gets \texttt{jvp}\bigl(t' \mapsto z_s + \frac{t'-s}{1-s}(f_\theta(z_s, s, t') - z_s),\, t,\, 1\bigr)$
\STATE $\pi_{t,t} \gets \texttt{sg}\bigl(f_\theta(X_{s,t}, t, t)\bigr)$
\STATE $r \gets (1-t)\,\partial_t X_{s,t} - (\pi_{1|t,t} - X_{s,t})$
\STATE \textbf{Return} $\lVert r \rVert^2$
\end{algorithmic}
\end{algorithm}

\begin{algorithm}[h]
\caption{ECLD Loss Computation}
\label{alg:ecld_loss}
\begin{algorithmic}[1]
\STATE \textbf{Input:} data $x_1$, noise $x_0$, network $f_\theta$, timesteps $s,t$
\STATE $z_s \gets (1-s)\, x_0 + s\, x_1$
\STATE $(\pi_{s,t},\, \partial_t \pi_{s,t}) \gets \texttt{jvp}\bigl(t' \mapsto f_\theta(z_s, s, t'),\, t,\, 1\bigr)$
\STATE $X_{s,t} \gets z_s + \frac{t-s}{1-s}(\pi_{s,t} - z_s)$
\STATE $\pi_{t,t} \gets \texttt{sg}\bigl(f_\theta(X_{s,t}, t, t)\bigr)$
\STATE $\mathcal{L}_{\mathrm{CE}} \gets -\sum_k \pi_{t,t}^{(k)} \log \pi_{s,t}^{(k)}$
\STATE $\mathcal{L}_{\mathrm{TD}} \gets \frac{t-s}{1-s} \lVert \partial_t \pi_{s,t} \rVert^2$
\STATE \textbf{Return} $4\,\mathcal{L}_{\mathrm{CE}} + 2\,\mathcal{L}_{\mathrm{TD}}$
\end{algorithmic}
\end{algorithm}

\section{Molecular experiments}
\label{app:molecules_section}

\subsection{Modelling choices for graphs}
\label{sec:app_graph_modelling}
The categorical formulation extends naturally to graphs. Following~\citet{vignac2023digress}, 
we represent a graph with $n$ nodes as a tuple $G = (X,E)$ where $X \in \{0,1\}^{n \times f_X}$ 
and $E \in \{0,1\}^{n \times n \times (f_E+1)}$ are one-hot encodings of categorical node 
and edge types. The extra edge class corresponds to the absence of an edge, effectively encoding the graph's full, $n^2$ adjacency matrix.

The mean-field factorisation from~\cref{eq:mean_field} decomposes over both nodes and edges:
\begin{equation}
q^{\theta}_{t,t}(G_1 \mid G) 
~=~ 
\prod_{i=1}^{n} q^{\theta}_{t,t}(X_1^i \mid G)
\prod_{i,j=1}^{n} q^{\theta}_{t,t}(E_1^{ij} \mid G),
\end{equation}
where a graph network outputs per-node and per-edge logits parameterising each factor.
Both $\mathcal{L}_{\mathrm{inf}}$ and $\mathcal{L}_{\mathrm{CSD}}$ decompose accordingly 
into node and edge terms---$\mathcal{L}_X$ and $\lambda_E$, respectively---which we balance as $\mathcal{L} = \mathcal{L}_X + \lambda_E \mathcal{L}_E$. We use $\lambda_E = 5$ in our experiments, as it tended to provide the best results.

\subsection{Additional results}
\label{app:additional_results_molecules}
In \cref{fig:app_results_qm9_zinc_longer} we show the same results as \cref{fig:results_qm9_zinc} but for NFEs up to 100. We observe that sample quality for the CSD model increases particularly strongly with higher NFE, whereas Naive Flow Matching and the ECLD model improve more slowly.

\Cref{fig:app_results_qm9_zinc_longer_euler} shows the quality of samples obtained using Euler sampling with the learned instantaneous velocity field. We can observe that the loss function significantly affects the quality of the learned vector field; even though CSD and ECLD use the same loss term for the instantaneous velocity, the velocity field learned with CSD significantly outperforms ECLD in terms of FCD score.

In \cref{fig:app_results_qm9_zinc_fm_euler_all} we compare Flow Map sampling with Euler sampling using the learned instantaneous velocity field for the Naive, CSD, and ECLD losses. At higher NFEs, the learned Flow Maps generally outperform Euler integration for both the Naive and CSD losses. However, for the ECLD loss, Euler integration achieves better FCD scores.

\begin{figure*}[h]
\centering
\includegraphics[width=\linewidth]{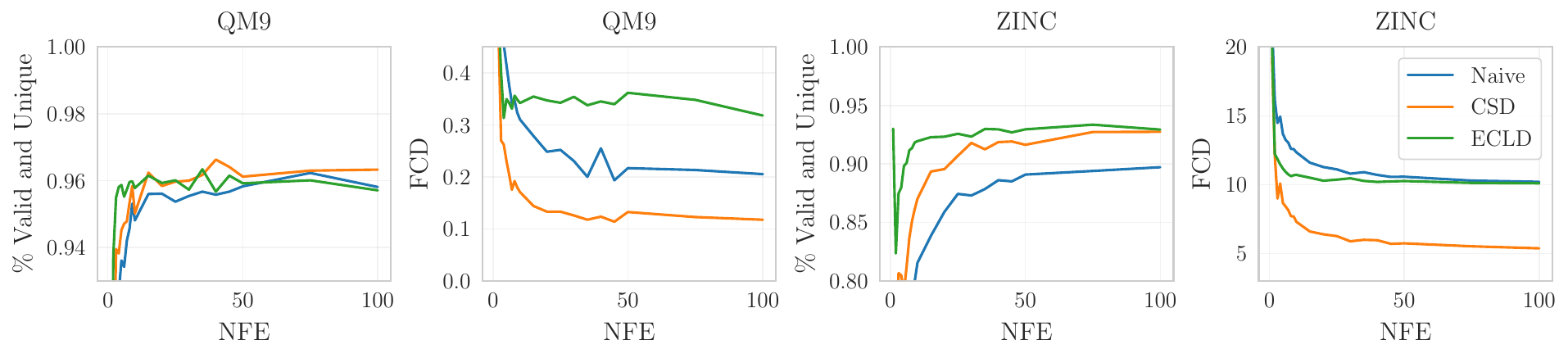}
\caption{Sample quality versus NFEs on QM9 (two leftmost panels) and ZINC (two rightmost panels), showing the fractions of valid and unique molecules ($\uparrow$), and FCD ($\downarrow$).}  
\label{fig:app_results_qm9_zinc_longer}
\end{figure*}

\begin{figure*}[h]
\centering
\includegraphics[width=\linewidth]{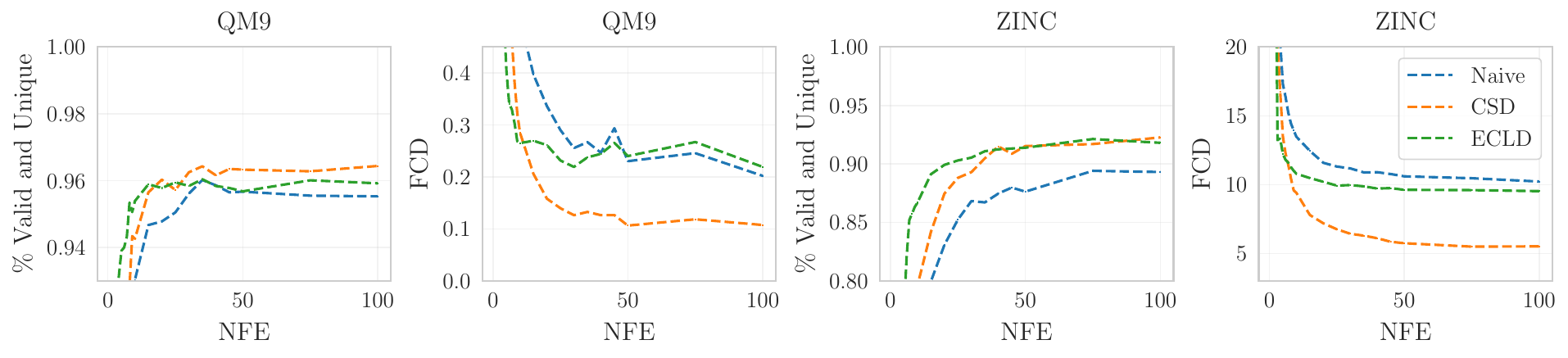}
\caption{Sample quality versus NFEs on QM9 (two leftmost panels) and ZINC (two rightmost panels), showing the fractions of valid and unique molecules ($\uparrow$), and FCD ($\downarrow$). Dashed lines indicate results of samples obtained using Euler sampling of the learned instantaneous velocity field.}  
\label{fig:app_results_qm9_zinc_longer_euler}
\end{figure*}

\begin{figure*}[t]
\centering

\begin{subfigure}{\linewidth}
  \centering
  \includegraphics[width=\linewidth]{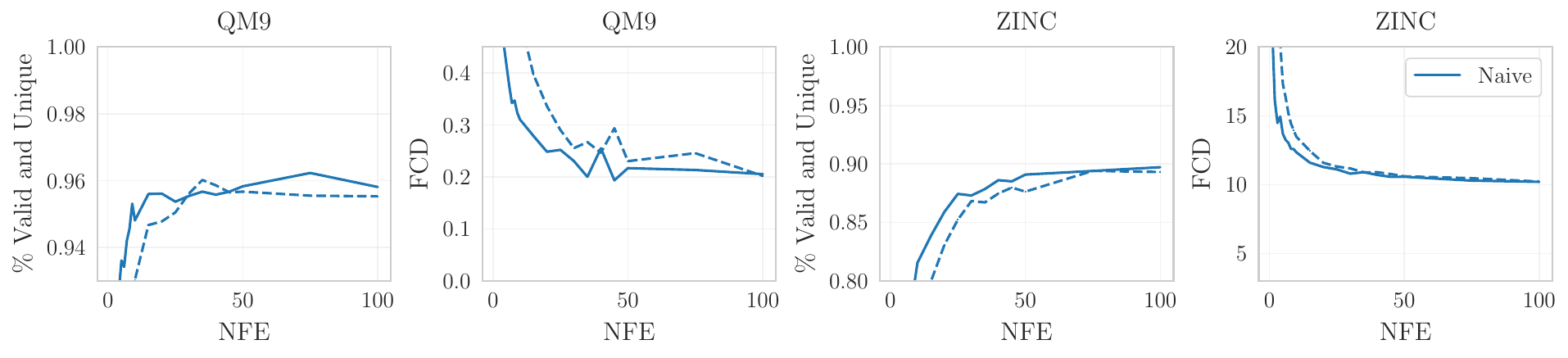}
  \caption{Naive Flow Map.}
  \label{fig:app_results_qm9_zinc_fm_euler_naive}
\end{subfigure}

\vspace{0.35em}

\begin{subfigure}{\linewidth}
  \centering
  \includegraphics[width=\linewidth]{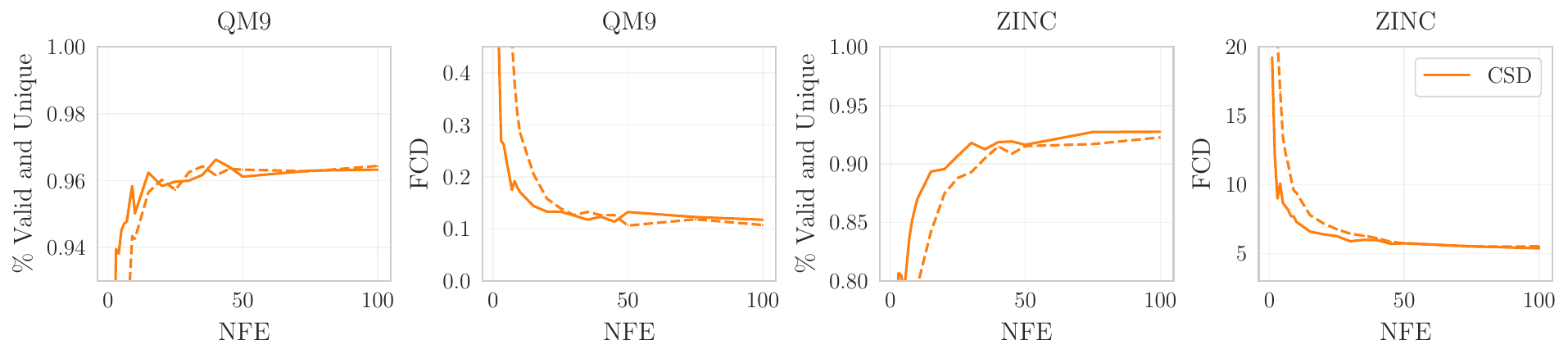}
  \caption{CSD loss.}
  \label{fig:app_results_qm9_zinc_fm_euler_csd}
\end{subfigure}

\vspace{0.35em}

\begin{subfigure}{\linewidth}
  \centering
  \includegraphics[width=\linewidth]{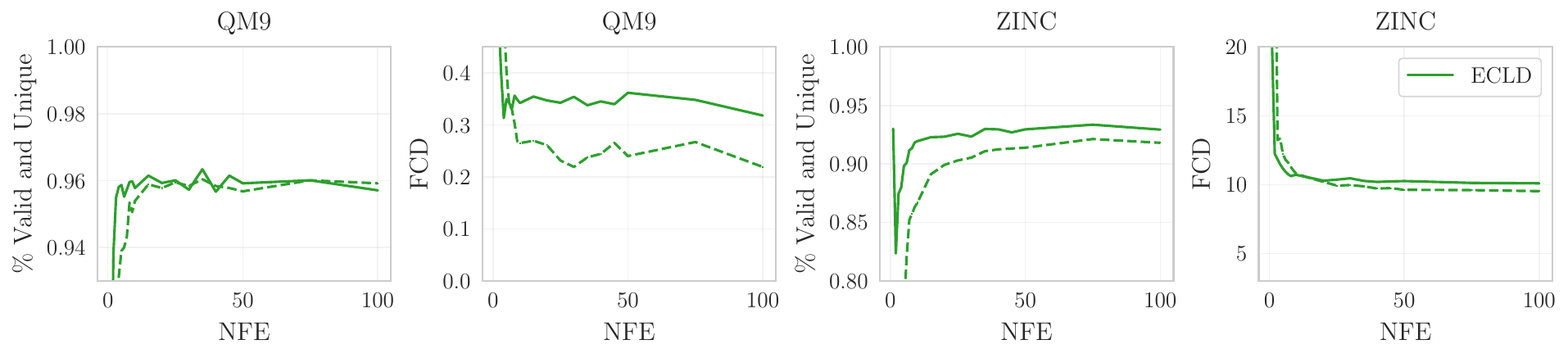}
  \caption{ECLD loss.}
  \label{fig:app_results_qm9_zinc_fm_euler_ecld}
\end{subfigure}

\caption{Sample quality versus NFEs on QM9 (two leftmost panels) and ZINC (two rightmost panels), showing the fractions of valid and unique molecules ($\uparrow$), and FCD ($\downarrow$). We compare Flow Map sampling (solid lines) versus Euler sampling using the learned instantaneous velocity field (dashed lines), for three training losses (Naive, CSD, ECLD).}
\label{fig:app_results_qm9_zinc_fm_euler_all}
\end{figure*}


\subsection{Datasets}
The QM9 dataset \citep{qm9} contains molecules up to 9 heavy atoms. We use the same data split as \citep{vignac2023digress}, with 100K molecules for training, 20K for validation, and 13K as test set. There are 4 atom (node) types and 3 bond (edge) types. 

The Zinc250k dataset \citep{zinc} contains 249,455 molecules with up to 38 heavy atoms from 9 element types. Our training set consists of 213,912 molecules and 23,768 molecules for validation. 

\subsection{Metrics}
Validity is measured as the percentage of generated graphs that yield a fully-sanitizable RDKit molecule and can be converted to SMILES. Uniqueness is the percentage of valid molecules that are unique as measured by the SMILES of the largest connected fragment. 
Fréchet ChemNet Distance (FCD) evaluates the distance between data and generated molecules using the activations of the final layer of ChemNet. 

\subsection{Baselines}
We compare performance of the CSD and ECLD losses against the following baselines: Set2GraphVAE~\citep{graphvae} and MoFlow~\citep{moflow} as non-diffusion based one-shot methods, DeFoG as discrete flow matching baseline \citep{qinmadeira2024defog}, PairFlow as a few-step inference-acceleration method for discrete diffusion/flow models, and GDSS~\citep{gdss}, GruM~\citep{grum} and CatFlow~\citep{vfm} as continuous diffusion-based baselines. To isolate the effect of the endpoint parametrisation, we train an otherwise identical model using the standard Lagrangian loss with unconstrained velocities (Flow Map).
We report results from the original papers whenever they match our metric definitions and evaluation setting. For MoFlow, we quote the values reproduced under the standardised QM9/ZINC evaluation pipeline of~\citet{gdss}. PairFlow operates on SMILES rather than molecular graphs; we include it as a few-step discrete diffusion acceleration baseline and derive the reported percentages from the published counts ($N =1{,}024$), marked by *. We use the +DCD variant which reports the best scores. For our models, we use early stopping based on FCD (Fr\'echet Chemical Distance) of generated samples: amongst saved checkpoints, we select the checkpoint that minimises FCD on one-step generation. 

\subsection{Architecture}
\label{app:graph_network}
In our molecular experiments we use the generative graph transformer backbone introduced by \citet{vignac2023digress}. It takes in a (noisy) node feature tensor $X \in \mathbb{R}^{B \times N \times f_X}$, an edge feature tensor $E \in \mathbb{R}^{B \times N \times N \times f_E}$ and a global feature tensor $y \in \mathbb{R}^{B \times f_y}$ where $B$ is batch size and $N$ is the maximum number of nodes. If a graph contains $M < N$ nodes, then the last $N-M$ nodes are zero-padded and attention is not calculated over these nodes and edge features. In diffusion-like models such as \citet{vignac2023digress, disco, vfm, qinmadeira2024defog} $y$ typically contains the time $t \in [0,1]$, and possible topological embeddings such as those used in \cite{vignac2023digress, vfm, siraudin2024comethcontinuoustimediscretestategraph, qinmadeira2024defog}. We do not use additional topological features; in our method, $y = \text{cat}[s,(t-s)]$. The graph transformer uses linear input layers for the inputs, followed by multi-head attention blocks that apply attention over $X$ modulated (using a FiLM layer) by $E$, the result of which is modulated by $y$. For more details on this graph transformer architecture we refer to \citet{vignac2023digress}.

Like \citet{lu2025simplifying, zhou2025terminalvelocitymatching} we found it necessary for stability reasons to replace the LayerNorm \citep{layernorm} layers inside each transformer block with RMSNorm layers with no affine scale/bias. We found that with the original LayerNorm layers, the gradients would explode after some tens of epochs of training. An early warning sign was a steady upward drift in the LayerNorm gradient-to-weight norm ratio (GWR), which consistently preceded the eventual gradient blow-up.

Following~\citet{boffi2025buildconsistencymodel}, we parameterise time-conditioning with magnitude-preserving sinusoidal embeddings \citep{edm2}. We represent the interpolation interval by $(s,t)\in[0,1]^2$ with $s\le t$, and embed both $s$ and $\Delta=t-s$) separately. Let $\phi_{\mathrm{mp}}:\,[0,1]\to\mathbb{R}^{d_\tau}$ denote the magnitude-preserving sinusoidal embedding, and let $\ell_{\mathrm{mp}}:\mathbb{R}^{d_\tau}\to\mathbb{R}^{d_y}$ denote a magnitude-preserving linear layer. We form
\[
h_s := \ell^s_{\mathrm{mp}}\!\big(\phi_{\mathrm{mp}}(s)\big), 
\qquad
h_\Delta := \ell_{\mathrm{mp}}^\Delta\!\big(\phi_{\mathrm{mp}}(\Delta)\big),
\qquad
h_{s,\Delta} := h_s \oplus_{\mathrm{mp}} h_\Delta,
\]
where $\oplus_{\mathrm{mp}}$ is the magnitude-preserving sum operator from \citet{edm2}. This operation resulting in $h_{s,\Delta}\in\mathbb{R}^{d_y}$ replaces the MLP applied to $y$ in the DiGress backbone. In preliminary ablations, replacing the input MLP by this sinusoidal parametrisation improved performance, and using $\oplus_{\mathrm{mp}}$ performed slightly better than concatenation followed by additional linear layers.

Each layer of the graph transformer employed 8 attention heads. The MLPs within the attention blocks had hidden unit dimensions of 256, 128, 128 for node (X), edge (E), and global (y) feature transformations, respectively. The position-wise feed-forward networks (FFNs) following attention used hidden units of [256, 128, 128] for X, E, y respectively. All MLPs used ReLU activations.

For the QM9 experiments we used 9 transformer layers, and for the ZINC experiment 12 transformer layers.
\subsection{Weight network}
Following \citet{boffi2025buildconsistencymodel}, we learn an uncertainty-based loss weight \citep{lossweight} via a small network trained jointly with the model. Let $\phi_{\mathrm{mp}}:[0,1]\to\mathbb{R}^{d_\tau}$ denote the magnitude-preserving sinusoidal embedding, $\ell_{\mathrm{mp}}:\mathbb{R}^{d_\tau}\to\mathbb{R}^{d_w}$ a magnitude-preserving linear layer, and $\ell_{\mathrm{mp}}^{\mathrm{out}}:\mathbb{R}^{d_w}\to\mathbb{R}$ a magnitude-preserving output layer. The learned weight is
\[
w(s,t) := \ell_{\mathrm{mp}}^{\mathrm{out}}\!\bigl(\phi_{\mathrm{mp}}(s) \oplus_{\mathrm{mp}} \phi_{\mathrm{mp}}(t)\bigr),
\]
where $\oplus_{\mathrm{mp}}$ is the magnitude-preserving sum \citep{edm2}. The weighted losses are
\begin{equation}
\mathcal{L}^{w}_{\mathrm{inf}}(\theta)
:=
\int_{0}^{1}
\mathbb{E}\!\left[
e^{-w(t,t)}
\left(
-\sum_{k=1}^{K}
\mathbf{1}\{x_1^{d}=k\}\,
\log \pi^{\theta,k}_{t,t}(x_t)
\right)
+
w(s,t)
\right]\mathrm{d}t,
\end{equation}
\begin{equation}
\mathcal{L}^{w}_{\mathrm{CSD}}(\theta)
:=
\int_{0}^{1}\int_{0}^{t}
\mathbb{E}\!\left[
e^{-w(s,t)}
\,w_t\,
\big\lVert r_{s,t}^{\theta}(x_s)\big\rVert^{2}
+
w(s,t)
\right]\mathrm{d}s\,\mathrm{d}t,
\end{equation}
with $w_t \equiv 1$ and
\begin{equation}
r_{s,t}^{\theta}(x_s)
:=
(1-t)\,\partial_t X^{\theta}_{s,t}(x_s)
-
\pi^{\theta}_{1\mid t,t}\!\bigl(X^{\theta}_{s,t}(x_s)\bigr)
+
X^{\theta}_{s,t}(x_s),
\end{equation}
and
\begin{equation}
\mathcal{L}^{w}_{\mathrm{ECLD}}(\theta)
:=
\int_{0}^{1}\!\!\int_{0}^{t}
\mathbb{E}\!\left[
e^{-w(s,t)}
\left(
4\,w_t\,\mathrm{CE}\big(
\mathrm{sg}\big[\pi^{\mathrm{tgt}}_{s,t}(x_s)\big],
\pi_{s,t}^{\theta}(x_s)
\big)
+
2\,\gamma_{s,t}^2\,\big\|\partial_t \pi_{s,t}^{\theta}(x_s)\big\|_2^2
\right)
+
w(s,t)
\right]\mathrm{d}s\,\mathrm{d}t,
\end{equation}
where $w_t \equiv 1$ and $\gamma_{s,t} = (t-s)/(1-s)$.
Learned weighting improves performance for flow-map models. We omit it for standard flow matching, which learns only the instantaneous velocity field.

\subsection{Hyperparameters}
We use AdamW~\citep{adamw} with a learning rate $0.0001$, a batch size of $2048$ for QM9 and $256$ for ZINC, $(\beta_1, \beta_2) = (0.9, 0.999)$, a weight decay of $10^{-12}$, gradient clipping at $1.0$, and EMA decay $0.999$. Training uses a cosine learning rate schedule over $5{,}000$ epochs (QM9) or $500$ epochs (ZINC). The cross-entropy loss $\lambda_\mathrm{inf}$ uses a label smoothing of $0.1$. Following \citet{li2026basicsletdenoisinggenerative}, any factor $1-t$ or $1-s$ in a denominator is clamped to $0.05$.

Following \citet{geng2025meanflows}, we sample $(s,t)$ from a logit-normal distribution with $z \sim \mathcal{N}(-0.4, 1.0)$ and $u = \sigma(z)$, enforcing $s \le t$ by setting $s \leftarrow \min\{s, t\}$. The diagonal fraction is $\eta = 0.75$. For standard (non-categorical) flow maps, the diagonal loss is applied to the full batch, with the off-diagonal loss computed on the remaining $(1-\eta)$ fraction.

All models were trained on RTX A6000 GPUs. Training time was approximately 2 days (QM9) or 6 days (ZINC) per model.

\subsection{Qualitative Samples}
We include some uncurated qualitative samples from our models in~\Cref{fig:qm9_samples} and~\Cref{fig:zinc_samples}.
\begin{figure*}[h]
    \centering
    \includegraphics[width=0.9\linewidth]{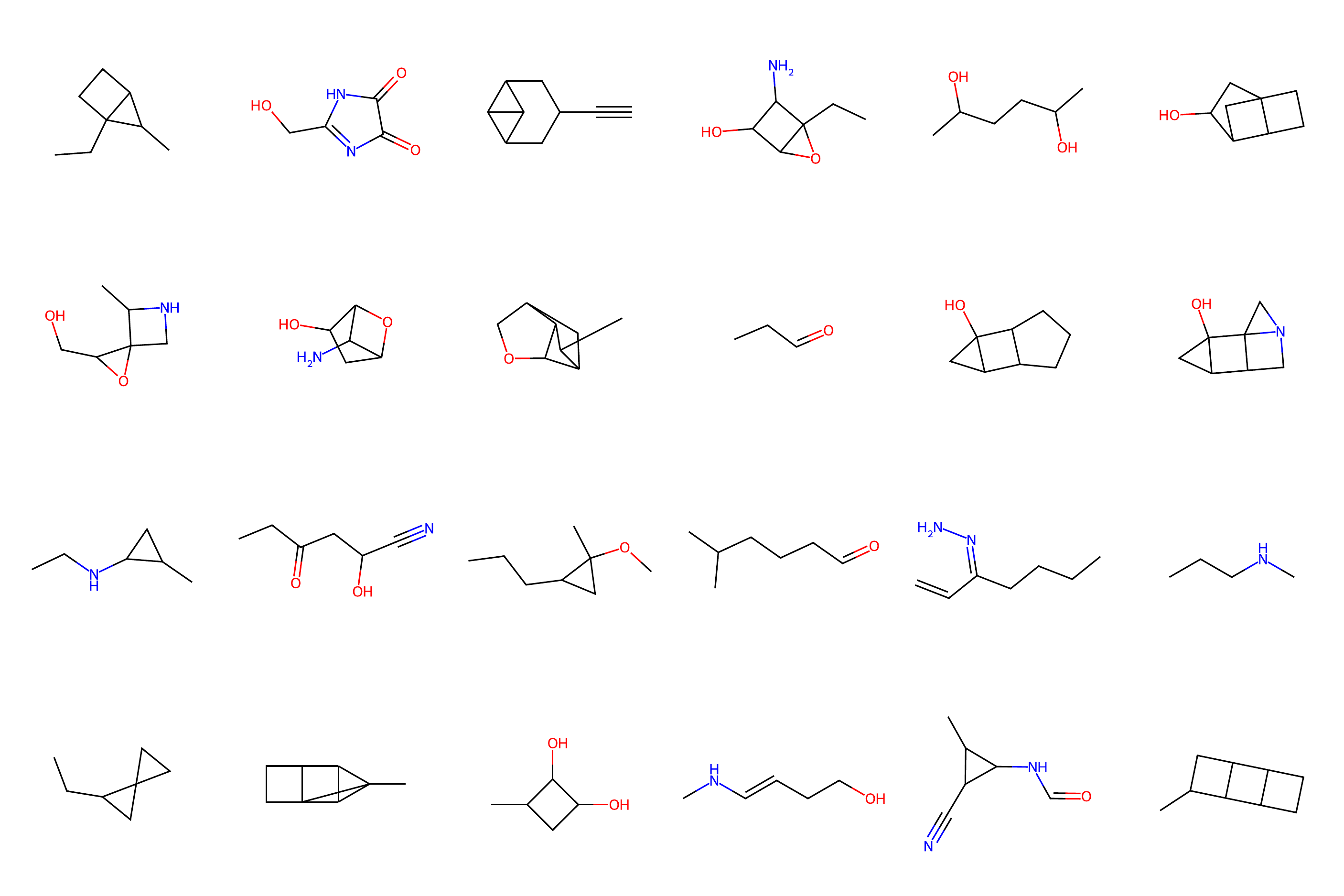}
    \caption{An uncurated selection of samples generated with a single function evaluation (NFE=1) using a model trained with the ECLD loss function on the QM9 dataset.}
    \label{fig:qm9_samples}
\end{figure*}

\begin{figure*}[t]
    \centering
    \includegraphics[width=0.9\linewidth]{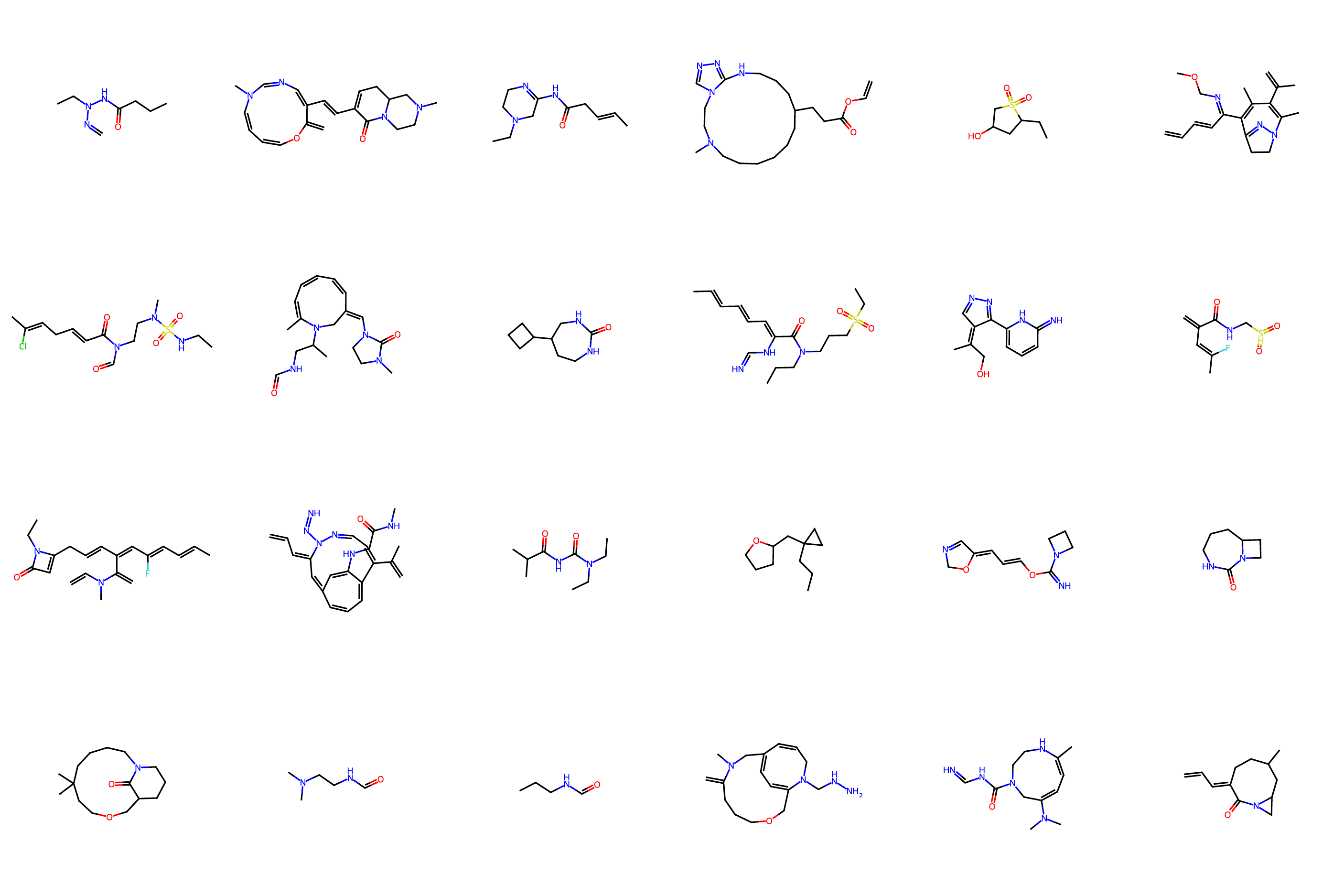}
    \caption{An uncurated selection of samples generated with a single function evaluation (NFE=1) using a model trained with the ECLD loss function on the ZINC dataset.}
    \label{fig:zinc_samples}
\end{figure*}

\section{Image datasets}

\section{Text datasets}

\subsection{DIT Embedding Layer}
\label{app:dit_embedding_layer}
In the adapted diffusion transformer~\citep{peebles2023scalablediffusionmodelstransformers} from~\citet{duo}, the embedding layer is adapted to allow for both discrete and continuous inputs. A softmax is thus applied to continuous inputs and the discrete embeddings are then essentially averaged in practice. We have found that this softmax to be particularly detrimental to our training. A first simple version of the embedding layer is to simply have a linear projection without the softmax, but with a normalisation of the inputs by the square root of their dimension, $\sqrt{d}$. The heuristic was to approximately normalise their variance. Large improvements were observable, especially on the lower dimensional Text8.

It is then of interest to study the statistics of our inputs. Let $x_0\sim\mathcal{N}(0, 1)$, and let $x_1\sim p_\mathrm{data}$, and similarly let, for all $t \in [0,1]$, $x_t = (1-t)x_0 + t x_1$. Consider the expected norm of $x_t$ with respect to $x_0$ and $x_1$:
\begin{align}
    \mathbb{E}\lVert x_t\rVert^2 = \mathbb E\sum_{i = 1}^d \left[(1-t)x_0^i + tx_1^i\right]^2 = \sum_{i=1}^d \mathbb{E}\left[(1-t)^2(x_0^i)^2 + t^2 (x_1^i)^2 + t(1-t)x_0^i x_1^i\right].
\end{align}
Since $\mathbb E x_0 = \mathbf 0$, the last term is zero, and $\mathbb E(x_0^i)^2 = \mathrm{var}(x_0^i) = 1$. Notice as well that there exists an index $1 \leq j \leq d$ such that $x_0 = \mathbf e_j$. The above sum thus becomes:
\begin{align}
    \mathbb{E}\lVert x_t\rVert^2 = (1-t)^2 + t^2 + \sum_{\substack{i=1 \\ i\neq j}}^d (1-t)^2 = d(1-t)^2 + t^2.
\end{align}
We employed this knowledge to design our more complex embedding layer: we use an RMS-normalised projection of noisy continuous inputs, augmented by a shallow residual MLP and FiLM conditioning~\citep{perez2017filmvisualreasoninggeneral}, to produce noise-level-stable representations for DiT models.

\subsection{Hyperparameters}
To train our models, we use AdamW~\citep{adamw} with a learning rate of $0.001$, a weight decay of $0.1$, gradient clipping at $1.0$, no EMA, and a batch size of $256$ and $8$ gradient accumulation steps for Text8, and a batch size of $400$ for LM1B. We also add a linear warm up in $500$ steps, with a start factor at $0.001$. Our models are trained using \verb|bfloat16| mixed precision. Our model is trained with a dropout probability of $0.1$.

\subsection{Qualitative samples}
\label{app:text_samples}
We provide some qualitative samples from our LM1B model in~\Cref{lst:lm1b_samples}, using a single sampling step. Note that the \verb|[CLS]| strings delimit different parts of the sequence, and are present because of sequence packing.

\begin{lstlisting}[breaklines,caption={Non-cherry-picked qualitative samples from our LM1B trained model, sampled with a single step.},label={lst:lm1b_samples}]
[CLS] and women ve on help sleeves and let\'s air in lieu of policy ( 122 )! this - it\'s an oddl to comprehend. [CLS] talk about this system and its end with one without want service comment. [CLS] washington ( ap ) - the the great shows shuttle barker, however, disappears across the day season after a sigh of emotion when the storm air officered nov. [CLS] even saturday, hundreds of protesters were stilloned by at the brandenburgp, by thousands of collaborating with rifles and waved banners from the heaviest without working with " littleual them. " [CLS] the authors emphasized that the scanners state short [CLS]


[CLS] system that differs fromgrading. [CLS] it is offering a matter of con onerous information about the " teaminator a " so whereby now has or stolen. [CLS] a small company spokesperson said [CLS] a system for travellers on the world have sparked a massive economic hidden in the late vacated by foreign side robbers family home an area gas. [CLS] meanwhile, fire mp councillorsw councillor in capitaloman for strath said : " it isly quite unacceptable. [CLS] " any happens in the uk occurred during a change of consolidation, " gao said and, its research strategy that has already had price expectations for the precious months due to a wider e [CLS]
\end{lstlisting}

\end{document}